\documentclass[a4paper,11pt,english]{article}

\usepackage[a4paper,lmargin=2.9cm,rmargin=2.9cm]{geometry}
\usepackage{algorithmic, algorithm}
\usepackage{xspace}
\usepackage{amsmath, amssymb, amsthm, dsfont, mathtools}
\usepackage{url}
\usepackage{enumitem}

\allowdisplaybreaks[3]

\newtheorem{myremark}{Remark}

\newcommand{\expect}[1]{E(#1)}
\newtheorem{theorem}{Theorem}
\newtheorem{lemma}[theorem]{Lemma}

\newtheorem{corollary}[theorem]{Corollary}
\newtheorem{remark}[theorem]{Remark}


\title{General Drift Analysis with Tail Bounds\footnote{A preliminary version 
of this paper appeared in the proceedings of ISAAC~2014 \cite{LehreWittISAAC2014}.}}
			
\author{Per Kristian Lehre\\
School of Computer Science\\
University of Birmingham\\
Birmingham, B15 2TT\\
United Kingdom\\
P.K.Lehre@birmingham.ac.uk
	\and
Carsten Witt\\
DTU Compute\\
Technical University of Denmark\\
2800 Kgs.\ Lyngby\\
Denmark\\
cawi@dtu.dk}


%


\clubpenalty=10000
\widowpenalty=10000


\newcommand{\ea}{(1+1)~EA\xspace}

\newcommand{\wrt}{w.\,r.\,t.\xspace}

\newcommand{\ie}{i.\,e.\xspace}
\newcommand{\eg}{e.\,g.\xspace}
\newcommand{\LO}{\textsc{Leading\-Ones}\xspace}

\newcommand{\ONEMAX}{\textsc{OneMax}\xspace}
\newcommand{\OneMax}{\ONEMAX}

\newcommand{\N}{\mathds{N}}
\newcommand{\R}{\mathds{R}}
\newcommand{\filt}{\mathcal{F}_t}

\newcommand{\filtzero}{\mathcal{F}_0}
\newcommand{\filtcond}[1]{\,;\,{#1}\mid\filt}

\newcommand{\filtuc}[1]{\mathcal{F}_{#1}}

\newcommand{\indic}[1]{\mathds{1}\left\{#1\right\}}
\newcommand{\semc}{\;;\;}

\DeclareMathOperator{\Prob}{Pr}

\newcommand{\E}[1]{\mathord{E}\mathord{\left(#1\right)}}
\newcommand{\bigO}[1]{\mathord{O}\mathord{\left(#1\right)}}
\newcommand{\littleo}[1]{\mathord{o}\mathord{\left(#1\right)}}

\newenvironment{proofof}[1]{\begin{proof}[#1]}{\end{proof}}

\newcommand{\expec}[1]{\mathord{E}\mathord{\left(#1\right)}}

\newenvironment{statement}{\begin{trivlist}\item\itshape}{\end{trivlist}}

\newcommand{\xmin}{x_{\min}}
\newcommand{\xmax}{x_{\max}}

\begin{document}
\maketitle

\begin{abstract}
Drift analysis is one of the state-of-the-art techniques for the
runtime analysis of randomized search heuristics (RSHs) such as
evolutionary algorithms (EAs), simulated annealing etc.  The vast majority
of existing drift theorems yield bounds on the expected value of the
hitting time for a target state, \eg, the set of optimal solutions,
without making additional statements on the distribution of this
time. We address this lack by providing a general drift theorem that
includes bounds on the upper and lower tail of the hitting time
distribution. The new tail bounds are applied to prove very precise
sharp-concentration results on the running time of a simple
EA on standard benchmark problems, including
the class of general linear functions.
Surprisingly, the probability
of deviating by an $r$-factor in lower order terms of the expected
time decreases exponentially with $r$ on all these
problems.
The usefulness of the theorem outside the theory of RSHs is
demonstrated by deriving tail bounds on the number of cycles in random permutations.
All these results
handle a position-dependent (variable) drift that was not covered by 
previous drift theorems with tail bounds.
Moreover, our theorem can be specialized into virtually all existing drift theorems with drift towards the
target from the literature. Finally, user-friendly specializations
of the general drift theorem are given.
\end{abstract}

\section{Introduction}

Randomized search heuristics (RSHs) such as simulated annealing,
evolutionary algorithms (EAs), ant colony optimization etc. 
are highly popular techniques in
black-box optimization, \ie, the problem of optimizing a function with only
oracle access to the function. These heuristics often imitate some
natural process, and are rarely designed with analysis in mind. Their
extensive use of randomness, such as in the mutation operator, render
the underlying stochastic processes non-trivial.
While the theory of RSHs is less developed than the theory of
classical, randomized algorithms, significant progress has been made
in the last decade
\cite{AugerDoerrBook,NeumannWittBook,JansenBook}. This theory has
mainly focused on the optimization time, which is
the random variable $T_{A,f}$ defined as the
number of oracle accesses the heuristic $A$ makes before the maximal
argument of $f$ is found. 
Most studies considered the expectation of $T_{A,f}$,
however more information about the distribution of the optimisation
time is  often needed. For example, the expectation can be deceiving
when the runtime distribution has a high variance. Also, tail bounds can be helpful for
other performance measures, such as fixed-budget computation
which seeks to estimate the approximation-quality as a function of time
\cite{DJWZGECCO13}.

 Results on the runtime of RSHs were obtained after relevant analytical techniques were
 developed, some adopted from other fields, others developed
 specifically for RSHs. 
Drift analysis, which is a central method
for analyzing the hitting time of stochastic processes, was introduced  
to the analysis of simulated annealing as early as in~1988 \cite{SasakiHajek1988}. 
Informally, it allows long-term properties of a discrete-time stochastic process
$(X_t)_{t\in\N_0}$ to be inferred from properties of the one-step
change $\Delta_{t} := X_t-X_{t+1}$. In the context of EAs, one has been particularly interested in the random
variable $T_a$ defined as the smallest $t$ such that $X_t\leq a$. 
For example, if $X_t$ represents the ``distance'' of the current
solution in iteration~$t$ to an optimum, then $T_0$ is the
optimization time.

Since its introduction to evolutionary computation by He and Yao in
2001 \cite{HeYao01}, drift analysis has been widely used to analyze the
optimization time of EAs. Many drift
theorems have been introduced, such as
additive drift theorems \cite{HeYao01}, multiplicative drift \cite{DoerrGoldbergAdaptive,DJWMultiplicativeAlgorithmica}, 
variable drift \cite{Johannsen10,MitavskiyVariable,RoweSudholtChoice}, and
population drift \cite{Lehre2010PopNegDrift}. Different assumptions 
and notation used in these theorems 
make it hard to abstract out a unifying statement. 

 Drift analysis is also used outside theory of RSHs,
 for example in queuing theory \cite{Coffman1999,Eryilmaz2012}. 
 The widespread use of these techniques
 in separated research fields has made it 
 difficult to get an overview of the drift theorems. 
 Drift analysis 
 is also related to other areas, such
 as stochastic differential equations and stochastic difference
 relations.

Most drift theorems used in the theory of RSHs relate to the
expectation of the hitting time $T_a$, and there are fewer results
about the tails $\Prob(T_a>t)$ and $\Prob(T_a<t)$. From the simple
observation that
$\Prob(T_a>t) \leq \Prob(\sum_{i=0}^t \Delta_i < a-X_0)$, 
the problem is reduced to bounding the deviation of a sum of random
variables. If the $\Delta_t$ were independent and identically
distributed, then one would be in the familiar scenario of
Chernoff/Hoeffding-like bounds. The stochastic processes originating
from RSHs are rarely so simple, in particular the $\Delta_t$ are often
dependent variables, and their distributions are not explicitly
given. However, bounds on the form
$\expec{\Delta_t\mid X_t}\geq h(X_t)$ for some function $h$ often hold. The drift
is called \emph{variable} when $h$ is a non-constant function. The
variable drift theorem provides bounds on the expectation of
$T_a$ given some conditions on $h$. However, there have been no
general tail bounds from a variable drift condition. The only results
in this direction seem to be the tail bounds for
probabilistic recurrence relations from \cite{KarpJACM94}; however,
this scenario corresponds to the specific case of
random variables $X_t$ that are monotonically decreasing over time.

Our main contribution is a new, general drift theorem that provides sharp
concentration results for the hitting time of stochastic processes
with variable drift, along with concrete advice and examples how to apply it.
The theorem is used to bound the tails of the
optimization time of the well-known \ea \cite{Droste2002} to the 
benchmark problems \OneMax and \LO, as well as the class of linear functions, which 
is an intensively studied problem in the area \cite{WittCPC13}. 
Surprisingly, the results show that the distribution is highly
concentrated around the expectation. The probability of deviating by
an $r$-factor in lower order terms decreases exponentially with $r$. 
In a different application outside the theory of RSHs, we use drift analysis to analyze probabilistic 
recurrence relations and show that the number of cycles in a random permutation
of $n$ elements is sharply concentrated around the expectation $\ln n$.
As a secondary 
contribution, we prove that our general drift theorem can 
be specialized into virtually all variants of drift theorems with drift towards the target 
(in particular, variable, additive, and multiplicative drift) 
that have been scattered over the literature on runtime analysis of 
RSHs. Unnecessary assumptions such as discrete or finite search spaces will 
be removed from these theorems.

This paper is structured as follows. 
Section~\ref{sec:prel} introduces notation and basics 
of drift analysis. Section~\ref{sec:tail-variable} presents the general drift theorem with tail bounds and 
suggestions for user-friendly corollaries. Section~\ref{sec:applyingtail} 
applies the tail bounds from our theorem. Sharp-concentration results on the running time of 
the \ea on \OneMax, \LO and general 
linear functions are obtained. The application outside the theory of RSHs 
with respect to random recurrence relations is 
described at the end of this section (Section~\ref{sec:recurrences}). 
 In all these applications, the probability of deviating by
an $r$-factor in lower order terms of the expected time 
decreases exponentially with $r$. 
 \ref{sec:specialcasesvariable} demonstrates the generality 
 of the theorem by identifying a large number of drift theorems from the literature as special cases.
We finish with some conclusions.

\section{Preliminaries}
\label{sec:prel}

We analyze time-discrete stochastic processes represented 
by a sequence of non-negative random variables $(X_t)_{t\in\N_0}$. 
For example, $X_t$ could represent 
a certain distance value of an RSH from an optimum. 
In particular, $X_t$ might aggregate several different random 
variables realized by an RSH at time~$t$ into a single one. In contrast to 
existing drift theorems, we do not demand that the state space is discrete (\eg, all non-negative integers) 
and do not require either that the state space is subset of a compact interval. Instead, the state space is 
a subset of the real numbers, possibly bounded on one side.

We adopt  the convention that the process should pass below some threshold $a\ge 0$ (``minimizes'' 
its state)  
and define the first hitting time $T_a:=\min\{t\mid X_t\le a\}$. 
If the actual 
process seeks to maximize its state, typically a straightforward mapping 
allows us to stick to the convention of minimization. 
In an important special case, we are interested in the hitting time~$T_0$ of target state~$0$; for 
example when a \ea, a very simple RSH, is run on the well-known \OneMax problem 
and were are interested in the first 
point of time where the number of zero-bits becomes zero. 
Note that 
$T_a$ is a stopping time and that we assume that 
the stochastic process is adapted to some filtration $(\filt)_{t\in\N_0}$,
such as its natural filtration $\sigma(X_0,\dots,X_t)$.

Our main goal is to describe properties of the distribution of the
first hitting time $T_a$, hence some information about the stochastic
process before that time is required. In particular, we consider the
expected one-step change of the process
\[\delta_t := \E{X_t - X_{t+1} \semc X_t>a \mid \filt},\] the so-called 
\emph{drift}.  For any event $A$ and random variable $X$, we use the
well-established notation $\E{X\semc A\mid \filt}:=\E{X\indic{A}\mid \filt}$, where
$\indic{}$ is the
indicator function.  Note that $\delta_t$ in general is a random
variable since the outcomes of $X_0,\dots,X_t$ are random. 
Suppose we
manage to bound the random variable $\delta_t$ from below by some real number $\delta^*>0$, conditioning on that 
$X_t\ge a$. This is the same as bounding 
\[
\E{X_t - X_{t+1} -\delta^* \semc X_t>a \mid \filt} \ge 0,
\]
except for the case that $\Prob(X_t>a)$, where the conditioning does not work; however, this difference 
is unimportant for our analysis of first hitting time. 
 Then, informally speaking, we know that the
process, conditioned on not having reached the target, 
decreases its state (``progresses towards~$0$'') in
expectation by at least~$\delta^*$ in every step, and the additive
drift theorem (see Theorem~\ref{theo:additive} below) will provide a
bound on $T_0$ that only depends on~$X_0$ and~$\delta^*$. In fact, the
very natural-looking result $\E{T_0\mid \filtzero} \le X_0/\delta^*$ will be
obtained. However, bounds on the drift might be more complicated.  For
example, a bound on $\delta_t$ might depend on~$X_t$ or states at even
earlier points of time, \eg, if the progress decreases as the current
state decreases. This is often the case in applications to
EAs. However, for such algorithms the whole ``history'' is rarely
needed. Simple EAs and other RSHs are Markov processes such that often
$\delta_t=\E{X_t-X_{t+1} \semc X_t>a \mid X_t}$ for an appropriate $X_t$.

With respect to Markov processes on 
discrete search spaces, drift conditions traditionally 
use conditional expectations 
	such as $\E{X_t - X_{t+1} \mid X_t=i}$ and bound these 
	for  all $i>a$ where $\Prob(X_t=i)>0$, leading to statements like 
$\E{X_t - X_{t+1} \mid X_t=i} \ge \delta$ instead of 
 instead of directly bounding the random variable
$\E{X_t - X_{t+1} - \delta; X_t>a \mid \filt}$. Note that $\Prob(X_t=i)$ may 
be zero everywhere in continuous search spaces.

As pointed out, the drift $\delta_t$ is in general a random variable and 
should not be confused with the ``expected drift'' $\E{\delta_t}=\E{E(X_t - X_{t+1}; X_t>a \mid \filt)}$, which rarely is available since  
it averages over the whole history of the stochastic process. Drift is based on the inspection of 
the progress from one step to another, taking into account every possible history. This one-step 
inspection often makes it easy to come up with bounds on $\delta_t$. Drift theorems could also 
be formulated based on expected drift, possibly allowing stronger statement on the first hitting time. 
However, in many applications it is infeasible to bound the expected value of the drift in a precise enough way for stronger 
statement to be obtained. See \cite{Jagerskupper11} 
for one of the rare analyses of ``expected drift'', which we will not get into in this paper.

 We now present the first drift theorem for additive 
drift. It is based on 
 \cite{HeYao01}, from which we removed the unnecessary assumptions that
 the search space is discrete and the Markov property. We only demand
 a bounded state space for the lower bound. The proof is partially inspired by similar 
 formulations of the additive drift theorem presented and discussed  in \cite{LenglerStegerDriftArxiv} and 
 \cite{LenglerDriftArxiv}.

 \begin{theorem}[Additive Drift, following \cite{HeYao01}]
 \label{theo:additive}
 Let $(X_t)_{t\in\N_0}$, be a stochastic process, adapted to a filtration $(\filt)_{t\in\N_0}$, over some state space
 $S\subseteq \R$, and let $b,\delta_{\mathrm{u}},\delta_{\mathrm{\ell}}>0$.  
 Then for $T_0:=\min\{t\mid X_t\le 0\}$ it holds:
 \begin{enumerate}[leftmargin=!,labelwidth=7mm,label=(\roman*)]
 \item  \label{itm:D1}
 If  $E(X_{t}-X_{t+1} - \delta_{\mathrm{u}} \semc X_t > 0 \mid\filt)   \geq 0$ and $X_t\ge 0$ for 
all for all $t\in\N_0$ then 
 $\E{T_0\mid \filtzero} \le \frac{X_0}{\delta_{\mathrm{u}}}$.
 \item \label{itm:D1lower}
 If $E(X_{t}-X_{t+1}-\delta_{\mathrm{\ell}} \semc  X_t > 0\mid\filt )
 \le 0$ and $X_t\leq b$ for all $t\in\N_0$, then 
 $\E{T_0\mid \filtzero} \ge \frac{X_0}{\delta_{\ell}}$.
 \end{enumerate}
 \end{theorem}

 \begin{proof}
   We start by  proving the first statement.
As it does not change the distribution of~$T_0$, we 
consider without loss of generality the stopped process $Y_t\coloneqq X_{t\wedge T_0}$ that does not change 
state after time~$T_0$. Therefore,  for all $t\in\N$, 
\[
\expect{Y_{t+1}-Y_t \;; X_t>0\mid \filtuc{t}} = \expect{Y_{t+1}-Y_t \;; t<T_0\mid \filtuc{t}} = \expect{Y_{t+1}-Y_t\mid \filtuc{t}}.
\]
Hence, Condition \ref{itm:D1} is equivalent to
\begin{equation*}
\expect{Y_{t+1}-Y_t \mid \filtuc{t}}  \le - \delta_{\mathrm{u}} \expect{\indic{t<T_0}\mid\filtuc{t}} 
\end{equation*}
which by rearrangement yields 
\begin{equation}
\expect{Y_{t+1} \mid \filtuc{t}}
\le Y_t - \delta_{\mathrm{u}} \expect{\indic{t<T_0}\mid\filtuc{t}}
\label{eq:add-drift-eq-one}
\end{equation}
Applying the tower property and using \eqref{eq:add-drift-eq-one} again,
\begin{align*}
  \expect{\expect{Y_{t+1}\mid \filtuc{t}}\mid\filtuc{t-1}}
  & \le \expect{ Y_t\mid \filtuc{t-1} } - \delta_{\mathrm{u}} \expect{\indic{t<T_0}\mid\filtuc{t-1}}\\ 
& \le 
Y_{t-1} -\delta_{\mathrm{u}} \expect{\indic{t-1<T_0}\mid\filtuc{t-1}}- \delta_{\mathrm{u}} \expect{\indic{t<T_0}\mid\filtuc{t-1}}
\end{align*}
if $t\ge 1$. Inductively, for all $t\ge 0$,
\[
  \expect{Y_{t+1}\mid \filtuc{0}} \le Y_0 -
  \delta_{\mathrm{u}} \sum_{i=0}^{t} \expect{\indic{i<T_0}\mid\filtuc{0}}
  = Y_0 -   \delta_{\mathrm{u}} \sum_{i=1}^{t} \expect{\indic{i\leq T_0}\mid\filtuc{0}}
\]
Now, since $\expect{Y_{t+1}\mid \filtuc{0}}\ge 0$ for all $t\ge 0$ 
by the assumption $Y_{t+1}\in\R_0^+$,  we obtain 
\[
  0 \leq Y_0 - \delta_{\mathrm{u}} \sum_{i=1}^{t} \expect{\indic{i\leq T_0}\mid\filtuc{0}}
 =  Y_0 -  \delta_{\mathrm{u}} \sum_{i=1}^{t+1}  \Pr(T_{a}\ge i \mid  \filtuc{0})
\]
for all $t\ge 0$. Letting $t\to\infty$, we have
\[
0 \le Y_0 - \delta_{\mathrm{u}} \sum_{i=1}^{\infty}  \Pr(T_{a}\ge i \mid \filtuc{0}) = Y_0 - \delta_{\mathrm{u}} \expect{T_0 \mid \filtuc{0}}.
\]
using the identity $\expect{X}=\sum_{i=1}^\infty \Pr(X\ge i)$ that applies to any random variable~$X$ taking only values 
from the non-negative integers. Also, 
$\expect{T_0\mid \filtuc{0}}<\infty$ since $Y_0$ only takes real values and $\delta_{\mathrm{u}}>0$ is assumed. 

Rearranging terms and substituting $X_0=Y_0$ concludes the proof of
the first statement.

For the second statement, we first use \ref{itm:D1lower} and proceed with reversed inequalities but otherwise 
analogously to the proof of the first statement. Therefore, we obtain for all $t\ge 0$ that
\[
\expect{Y_{t}\mid \filtuc{0}} \ge Y_0 -   \delta_{\mathrm{\ell}} \sum_{i=1}^{t}  \Pr(T_{a}\ge i\mid \filtuc{0}).
\]
Since probabilities are non-negative, this implies 
\[
\expect{Y_{t}\mid \filtuc{0}} \ge Y_0 -   \delta_{\mathrm{\ell}} \sum_{i=1}^{\infty}  \Pr(T_{a}\ge i\mid \filtuc{0}) = X_0 - \delta_{\mathrm{\ell}}\expect{T_0\mid \filtuc{0}}
\]
for all $t\ge 0$ and therefore also
\[
\expect{Y_{t} \mid \filtuc{0}} \ge  Y_0 - \delta_{\mathrm{\ell}}\expect{T_0 \mid \filtuc{0}}
\]
Letting $t\to\infty$, 
\[
\liminf_{t\to\infty}\expect{Y_{t} \mid \filtuc{0}} \ge  Y_0 - \delta_{\mathrm{\ell}}\expect{T_0\mid \filtuc{0}}.
\]

Substituting again $X_t=Y_{t\wedge T_0}$,
it now suffices to prove $\liminf_{t\to\infty}\expect{X_{t\wedge T_0 } \mid \filtuc{0}} \le 0$. 
To this end, we consider the sequence $X_{t\wedge T_0}$, $t\ge 0$, and recall that $X_t\le b<\infty$ for all $t\ge 0$. 
Hence, Conditions~1 and~2 of the dominated convergence theorem (Theorem~\ref{thm:dot} in the appendix) 
have been established with $V\coloneqq b$. Moreover, $\lim_{t\to\infty} X_{t\wedge T_0} = X_{T_0}$, where 
by 
definition $X_{T_0}\le 0$. Now the dominated convergence theorem yields
\begin{align*}
  \liminf_{t\to\infty} \expect{X_{t\wedge T_0}\mid \filtuc{0}}
  & \leq \lim_{t\to\infty} \expect{X_{t\wedge T_0}\mid \filtuc{0}} \\
  &= \expect{\lim_{t\to\infty} X_{t\wedge T_0}\mid \filtuc{0}} \\
  & = \expect{X_{T_0}\mid \filtuc{0}} \le 0.
\end{align*}
This completes the proof of the second statement.
\end{proof}

Summing up, additive drift is concerned with the very simple scenario
that there is a progress of at least~$\delta_{\mathrm{u}}$ from all
non-optimal states towards the target in~$(i)$ and a progress of at
most~$\delta_{\mathrm{\ell}}$ in $(ii)$. Since the $\delta$-values are
independent of~$X_t$, one has to use the worst-case drift
over all non-optimal~$X_t$. This might lead to very bad bounds on the first
hitting time, which is why more general theorems (as mentioned in the
introduction) were developed. Interestingly, these
more general theorems are often proved based on
Theorem~\ref{theo:additive} using an appropriate mapping (sometimes called \emph{Lyapunov function}, \emph{potential function}, \emph{distance 
function}   
or \emph{drift function}) from
the original state space to a new one.  Informally, the mapping
``smoothes out'' position-dependent drift into an (almost)
position-independent drift. 
We will use the same approach in the following.

Before proceeding to the general theorem, we point out the necessity
of a bounded state space for the lower bound. Without assuming
$X_t\leq b$, we cannot in general 
conclude that $\expect{T_0 \mid \filtuc{0}} \geq X_0 / \varepsilon$. 
Let us consider the following Markov chain as an example: 
the state space is $\N_0$ and $a\coloneqq 0$. 
From state $i \in \N$ the chain transits to state~$0$ with probability $1/2$
and to state $2i$  with probability $1/2$. In particular, at time~$t$ the 
process is either at state~$0$ or at state at least $t$. For the drift, we obtain 
$\expect{X_{t+1}-X_t \mid \filtuc{t}} = -X_t/2 + X_t/2 = 0$,   
so clearly the process satisfies $\expect{X_{t+1}-X_{t} + \varepsilon \mid \filtuc{t}} \ge 0$ for any
$\varepsilon>0$. Hence, if the bound 
$\expect{T_0 \mid \filtuc{0}}\geq X_0 / \varepsilon$ was true, we could obtain arbitrarily big lower bounds 
on  $\expect{T_0 \mid \filtuc{0}}$. 
However, the expected first hitting time 
for state $0$, starting from state $1$, is~$2$ 
since every step has a probability 
of~$1/2$ of leading to~$0$.

\section{General Drift Theorem}
\label{sec:tail-variable}

In this section, we present our general drift theorem. As pointed out
in the introduction, we strive for a very general statement, which is
partly at the expense of simplicity. More user-friendly
specializations will be given later. Nevertheless, the underlying idea
of the complicated-looking general theorem is the same as in all drift
theorems. We look into the one-step drift $\delta_t = \E{X_t-X_{t+1} \mid \filt}$, which 
is a random variable that may depend on the complete history of the process up to time~$t$. 
Then we  assume we have a (upper or lower) bound $h(X_t)$ on the drift, formally 
$\delta_t \ge h(X_t)$ or $\delta_t \le h(X_t)$, where the bound depends on~$X_t$ only, 
\ie, a possibly smaller $\sigma$-algebra than~$\filt$.  
Based on~$h$, 
we define a new function $g$ (see Remark~\ref{remark:function-h}),
with the aim of ``smoothing out'' the
dependency, and the drift \wrt\ $g$ (formally,
$\E{g(X_t)-g(X_{t+1})\mid \filt}$) is analyzed. Statements~$(i)$ and
$(ii)$ of the following Theorem~\ref{theo:main} provide bounds
on~$\E{T_0}$ based on the drift \wrt~$g$. In fact, $g$ can be  defined in a
very similar way as in existing variable-drift theorems
\cite{Johannsen10,MitavskiyVariable,RoweSudholtChoice}, such that
Statements~$(i)$ and $(ii)$ can be understood as generalized variable
drift theorems for upper and lower bounds on the expected hitting
time, respectively.

Statements~$(iii)$ and $(iv)$ are concerned with tail bounds on the
hitting time. Here moment-generating functions (mgfs.) of the drift \wrt~$g$
come into play, formally 
\[E(e^{-\lambda (g(X_t)-g(X_{t+1}))}\mid
\filt)\] is bounded. Again for generality, bounds on the
mgf.\ may depend on the point of time~$t$, as
captured by the bounds $\beta_{\mathrm{u}}(t)$ and
$\beta_{\mathrm{\ell}}(t)$. We will see an example in
Section~\ref{sec:applyingtail} where the mapping $g$ smoothes out the
position-dependent drift into a (nearly) position-independent and
time-independent drift, while the mgf.\ of the
drift \wrt~$g$ still heavily depends on the current point of time~$t$
(and indirectly on the position expected at this time).

Our drift theorem generalizes virtually all existing drift theorems concerned 
with a drift towards the target,
including the variable drift theorems for upper
\cite{Johannsen10,RoweSudholtChoice,MitavskiyVariable} and lower
bounds \cite{DFWVariable} (see
Theorem~\ref{theo:variable-rowe-sudholt} and
Theorem~\ref{theo:variable-dfw}), a non-monotone variable drift
theorem \cite{FeldmannKoetzingFOGA13} (see
Theorem~\ref{theo:variablenonmonotone}), and multiplicative drift
theorems \cite{DoerrGoldbergAdaptive,WittCPC13,DoerrDoerrKoetzingGECCO16} (see
Theorem~\ref{theo:multiplicative-drift} and
Theorem~\ref{theo:multdrift-lower}). Our theorem also generalizes
fitness-level theorems \cite{WegenerICALP01,SudholtTEC13} (see
Theorem~\ref{theo:fitnesslevels} and
Theorem~\ref{theo:fitnesslevells-lower}), another well-known technique
in the analysis of randomized search heuristics.
These generalizations are shown in
 \ref{sec:specialcasesvariable}. Note that 
we do not consider the case of 
negative drift (drift away from the target) as studied in \cite{OlivetoW11,OlivetoWittErratumDriftArxiv} 
since this scenario is handled with structurally different techniques.


\begin{myremark}\label{remark:function-h}
  If for some function $h\colon \R_{\ge \xmin}\to\R^+$ where
  $\xmin>0$ and $1/h(x)$ is integrable on $\R_{\ge \xmin}$, either
$    E(X_t-X_{t+1} - h(X_t)  \filtcond{X_t\ge \xmin}) \ge 0$ or 
$    E(X_t-X_{t+1} - h(X_t) \filtcond{X_t\ge \xmin}) \le 0$
  hold, it is recommended to define the function $g$
  in Theorem~\ref{theo:main} as
  \begin{align*}
    g(x) := \frac{\xmin}{h(\xmin)} + \int_{\xmin}^x \frac{1}{h(y)}
    \,\mathrm{d}y   
  \end{align*}
  for $x\geq \xmin$ and $g(0):=0$.
\end{myremark}

\begin{theorem}[General Drift Theorem]
\label{theo:main}
Let $(X_t)_{t\in\N_0}$, be a stochastic process, adapted to a filtration $(\filt)_{t\in\N_0}$, over some state space $S\subseteq \R$.
For some $a\ge 0$, let $T_a=\min\{t\mid X_t\le a\}$. 
Moreover, 
let 
 $g\colon S \to \R_{\ge 0}$ be a function
such that $g(0)=0$ and $g(x) > g(a)$ for all $x > a$.

Then:
\begin{enumerate}[leftmargin=!,labelwidth=7mm]
\item[(i)] 
If   
$E(g(X_t)-g(X_{t+1}) - \alpha_{\mathrm{u}} \filtcond{ X_t >  0})\ge 0$ 
for all $t\in\N_0$ and some $\alpha_{\mathrm{u}}>0$ then 
$E(T_0\mid \filtzero) \le \frac{g(X_0)}{\alpha_{\mathrm{u}}}$.

\item[(ii)] 
If there is $\xmax>0$ such that $g(X_t)\le \xmax$  and 
$E(g(X_t)-g(X_{t+1}) - \alpha_{\mathrm{\ell}} \filtcond{ X_t > 0})\le 0$ 
 for all $t\in\N_0$ and some $\alpha_{\mathrm{\ell}}>0$ then 
$E(T_0\mid \filtzero) \ge \frac{g(X_0)}{\alpha_{\mathrm{\ell}}}$.

\item[(iii)]
If 
there exists $\lambda>0$ and a function $\beta_{\mathrm{u}}\colon \N_0\to\R^+$ 
such that \[E(e^{-\lambda (g(X_t)-g(X_{t+1}))} - \beta_{\mathrm{u}}(t) \filtcond{ X_t >a}) \le 0\] for all $t\in\N_0$, 
 then 
$\Prob(T_a>t^* \mid \filtzero) < \left(\prod_{r=0}^{t^*-1} \beta_{\mathrm{u}}(r)\right) \cdot  e^{\lambda (g(X_0)-g(a))}$
for $t^* > 0$.

\item[(iv)]
If 
there exists $\lambda>0$ and a function $\beta_{\mathrm{\ell}}\colon \N_0\to\R^+$ 
such that \[E(e^{\lambda (g(X_t)-g(X_{t+1}))} - \beta_{\mathrm{\ell}}(t) \filtcond{ X_t > a}) \le 0\] 
for all $t\in\N_0$ then,
$\Prob(T_a < t^* \mid \filtzero ) \le \left( \sum_{s=1}^{t^*-1} \prod_{r=0}^{s-1} \beta_{\mathrm{\ell}}(r)\right) \cdot  e^{-\lambda (g(X_0)-g(a))}$ 
for $t^* > 0$ and $X_0>a$. 

If additionally the set of states $S\cap \{x\mid x\le a\}$ is absorbing, then 
$\Prob(T_a < t^* \mid \filtzero) \le \left(\prod_{r=0}^{t^*-1} \beta_{\mathrm{\ell}}(r)\right) \cdot e^{-\lambda (g(X_0)-g(a))}$.
\end{enumerate}
\end{theorem}

Statement~$(ii)$ is also valid (but useless) 
if the expected hitting time is infinite. 
 \ref{sec:specialcasesvariable} studies specializations of
 these first two statements into existing variable and multiplicative
 drift theorems which are mostly concerned with expected hitting time.

\textbf{Special cases of (iii) and (iv).} If 
$E(e^{-\lambda (g(X_t)-g(X_{t+1}))} - \beta_{\mathrm{u}} \filtcond{ X_t > a}) \le 0$ for 
some time-independent $\beta_{\mathrm{u}}$, then 
Statement~$(iii)$ simplifies down to 
$\Prob(T_a>t^* \mid \filtzero) < \beta_{\mathrm{u}}^{t^*} \cdot  e^{\lambda (g(X_0)-g(a))}$; similarly 
for Statement~$(iv)$.

The proof of our main theorem is not too complicated. The tail bounds in 
$(iii)$ and $(iv)$ are obtained by the exponential method 
(a generalized Chernoff bound), which idea is also implicit in \cite{HajekDrift}.

\begin{proofof}{Proof of Theorem~\ref{theo:main}}
Since $g(X_t)=0$ iff $X_t=0$ and since the image of~$g$ is bounded from below by~$0$ and additionally by $\xmax$ in item~$(ii)$, 
the first two items follow from the classical additive drift theorem (Theorem~\ref{theo:additive}). To prove the 
third one, we consider the stopped process that does not move after time~$T_a$. We 
now use ideas implicit in \cite{HajekDrift} and argue that 
\begin{align*}
\Prob(T_a>t^* \mid \filtzero) & \;\le\; \Prob(X_{t^*}>a\mid \filtzero) \;\le\; \Prob(g(X_{t^*})> g(a)\mid \filtzero) \\
& \; =\; \Prob(e^{\lambda g(X_{t^*})} > e^{\lambda g(a)}\mid \filtzero) 
 \;<\; E(e^{\lambda g(X_{t^*}) - \lambda g(a)}\mid \filtzero),
\end{align*}
where the second inequality uses that $X_{t^*}>a$ implies $g(X_{t^*})>g(a)$, the equality
that $x\mapsto e^x$ is a bijection, 
and the last inequality is Markov's inequality. Now,
\begin{align*}
E(e^{\lambda g(X_{t^*})}\mid \filtzero) & = E(e^{\lambda g(X_{t^*-1})} \cdot E(e^{-\lambda (g(X_{t^*-1})-g(X_{t^*}))}\mid \mathcal{F}_{t^*-1}) \mid \filtzero)\\
& \le E(e^{\lambda g(X_{t^*-1})}  \mid \filtzero) \cdot\beta_{\mathrm{u}}(t^*-1) 
\end{align*}
using the prerequisite from the third item.
Unfolding the remaining expectation inductively    
(note that this does not 
assume independence of the differences $g(X_{r-1})-g(X_{r})$), 
we get 
\[
E(e^{\lambda g(X_{t^*})}\mid \filtzero) \le e^{\lambda g(X_0)} \prod_{r=0}^{t^*-1}  \beta_{\mathrm{u}}(r),
\]
altogether
\[
\Prob(T_a>t^*\mid \filtzero) < e^{\lambda (g(X_0)-g(a))} \prod_{r=0}^{t^*-1}  \beta_{\mathrm{u}}(r),
\]
which proves the third item.

The fourth item is proved similarly as the third one. Using a union bound and that $X_{t^*}\le a$ follows  
from $g(X_{t^*})\le g(a)$, 
\[
\Prob(T_a<t^*\mid \filtzero) \le \sum_{s=1}^{t^*-1} \Prob(g(X_s) \le g(a)\mid \filtzero) 
\]
for $t^*>0$, assuming $X_0 > a$. 
Moreover, 
\[
\Prob(g(X_s) \le g(a)\mid \filtzero) = \Prob(e^{-\lambda g(X_s)} \ge e^{-\lambda g(a)}\mid \filtzero)
\le E(e^{-\lambda g(X_s)+\lambda g(a) }\mid \filtzero)
\]
using again Markov's inequality. By the prerequisites, we get 
\[
E(e^{-\lambda g(X_s) }\mid \filtzero) \le e^{-\lambda g(X_0)} 
\prod_{r=0}^{s-1}  \beta_{\mathrm{\ell}}(r)
\]
Altogether,
\[
\Prob(T_a<t^* \mid \filtzero) \le \sum_{s=1}^{t^*-1} 
e^{-\lambda (g(X_0)+g(a))} \prod_{r=0}^{s-1}  \beta_{\mathrm{\ell}}(r).
\]

If furthermore $S\cap \{x\mid x\le a\}$ is absorbing then the event $X_{t^*}\le a$ is necessary for $T_a<t^*$. In this case,
\[
\Prob(T_a<t^*\mid \filtzero) \le \Prob(g(X_{t^*}) \le g(a)\mid \filtzero) 
\le  e^{-\lambda (g(X_0)+g(a))}
\prod_{r=0}^{t^*-1}  \beta_{\mathrm{\ell}}(r).
\]
\end{proofof}

Given some assumptions on the ``drift'' function $h$ that typically hold, 
Theorem~\ref{theo:main} can be simplified. The following corollary
will be used to prove the multiplicative drift theorem
(Theorem~\ref{theo:multiplicative-drift} in Section~\ref{sec:specialcasesother}).
 Some applications of it 
require 
a ``gap''  between optimal and non-optimal states, modelled 
by $\xmin>0$. One example is in fact multiplicative drift. 
 Another 
example is the process defined by $X_0\sim \text{Unif}[0,1]$ and $X_t=0$ for $t>0$. Its first 
hitting time of state~$0$ cannot be derived by drift arguments since 
the lower bound on the drift towards the optimum within the interval $[0,1]$ has limit~$0$.


\begin{corollary}
\label{cor:tailbound-h-convex-concave}
  Let $(X_t)_{t\in\N_0}$, be a stochastic process, adapted to a filtration $(\filt)_{t\in\N_0}$, over some state space  $S\subseteq \{0\}\cup \R_{\ge \xmin}$, where 
  $\xmin\ge 0$. Let  $h\colon \R_{\ge \xmin}\to\R^+$ be a function such
  that $1/h(x)$ is integrable on $ \R_{\ge \xmin}$ and $h(x)$ differentiable on $ \R_{\ge \xmin}$.
  Then the following statements hold for the first hitting time $T:=\min\{t\mid X_t=0\}$.
  \begin{enumerate}[leftmargin=!,labelwidth=7mm]
  \item[(i)] If $E(X_t-X_{t+1} - h(X_t) \filtcond{ X_t\ge \xmin}) \ge 0$ for all  $t\in\N_0$ and
    $\tfrac{\mathrm{d}}{\mathrm{d}x}h(x)\geq 0$, then 
$      E(T\mid \filtzero) \le \frac{\xmin}{h(\xmin)} + \int_{\xmin}^{X_0} \frac{1}{h(y)} \,\mathrm{d}y.$
		
  \item[(ii)] If $E(X_t-X_{t+1} - h(X_t) \filtcond{ X_t\ge \xmin}) \le 0$, $X_t\le \xmax$ for some $\xmax>0$ and all $t\in\N_0$,  and
    $\tfrac{\mathrm{d}}{\mathrm{d}x}h(x)\leq 0$, then 
$      E(T\mid \filtzero) \ge \frac{\xmin}{h(\xmin)} + \int_{\xmin}^{X_0} \frac{1}{h(y)} \,\mathrm{d}y.$
		
  \item[(iii)] If $E(X_t-X_{t+1} - h(X_t) \filtcond{ X_t\ge \xmin}) \ge 0$  for all  $t\in\N_0$ and 
    $\tfrac{\mathrm{d}}{\mathrm{d}x}h(x)\geq \lambda$ for some $\lambda>0$, then
$      \Prob(T>t^* \mid \filtzero) < \exp\left(-\lambda \left(t^*-\frac{\xmin}{h(\xmin)}-\int_{\xmin}^{X_0} \frac{1}{h(y)}\,\mathrm{d}y\right)\right).$
  \item[(iv)]
    If $E(X_t-X_{t+1} - h(X_t) \filtcond{ X_t\ge \xmin}) \le 0$
    for all  $t\in\N_0$ 
    and $\tfrac{\mathrm{d}}{\mathrm{d}x}h(x)\leq -\lambda$ for  some $\lambda>0$, then, on $X_0>0$, 
$      \Prob(T < t^* \mid \filtzero) < \frac{e^{\lambda t^*}-e^{\lambda}}{e^\lambda-1} \exp\left(-\frac{\lambda\xmin}{h(\xmin)}-\int_{\xmin}^{X_0} \frac{\lambda}{h(y)}\,\mathrm{d}y\right).
$ \end{enumerate}
\end{corollary}

\begin{proof}
  As in Remark~\ref{remark:function-h}, let $g(x) := \xmin/h(\xmin) + \int_{\xmin}^x 1/h(y) \,\mathrm{d}y$ and 
	$g(0):=0$.
  Note that for the second derivative we have $g''(x)=-(\tfrac{\mathrm{d}}{\mathrm{d}x}h(x))/h(x)^2$.

  For $(i)$, it suffices to show that condition $(i)$ of
  Theorem~\ref{theo:main} is satisfied for $\alpha_u:=1$.
  From the assumption $h'(x)\geq 0$, it follows that $g''(x)\leq 0$, 
  hence $g$ is a concave function. Jensen's inequality therefore
  implies that
  \begin{align*}
    E(g(X_t)-g(X_{t+1})  \filtcond{ X_t\ge \xmin})
    & \geq g(X_t) - g(    E(X_{t+1} \filtcond{ X_t\ge \xmin}))\\
    & \geq \int_{X_t-h(X_t)}^{X_t}\frac{1}{h(y)}\,\mathrm{d}y
     \geq \frac{1}{h(X_t)}\cdot h(X_t) = 1,
  \end{align*}
  where the last inequality holds because $h$ is a non-decreasing function.

  For $(ii)$, we note that $g(X_t)\le b$ for some $b\in \R$ since $X_t\le \xmax$ for $t\in\N_0$. Hence, 
	it suffices to show that condition $(ii)$ of
  Theorem~\ref{theo:main} is satisfied for $\alpha_\ell:=1$.
  From the assumption $h'(x)\leq 0$, it follows that $g''(x)\geq 0$,
  hence $g$ is a convex function. Jensen's inequality therefore
  implies that
  \begin{align*}
    E(g(X_t)-g(X_{t+1})  \filtcond{ X_t\ge \xmin})
    & \leq g(X_t) - g(    E(X_{t+1} \filtcond{ X_t\ge \xmin}))\\
    &  \leq \int_{X_t-h(X_t)}^{X_t}\frac{1}{h(y)}\,\mathrm{d}y
      \leq \frac{1}{h(X_t)}\cdot h(X_t) = 1,
  \end{align*}
  where the last inequality holds because $h$ is a non-increasing
  function.

  For $(iii)$, it suffices to show that condition $(iii)$ of
  Theorem~\ref{theo:main} is satisfied for $\beta_u := e^{-\lambda}$. 
  Let $f_1(x):=e^{\lambda g(x) }$ and note that 
      $f_1''(x)=\frac{\lambda e^{\lambda g(x)}}{h(x)^2}\cdot (\lambda-h'(x))$.
  Since $h'(x)\geq \lambda$, it follows that $f_1''(x) \leq 0$ and $f_1$
  is a concave function. By Jensen's inequality, it holds that
  \begin{align*}
    E(e^{-\lambda (g(X_t)-g(X_{t+1}))}\filtcond{ X_t\ge \xmin}) 
    \leq 
    e^{-\lambda r}
  \end{align*}
  where
  \begin{align*}
    r & :=    g(X_t)-g(\E{X_{t+1}\filtcond{ X_t\ge \xmin}})
       \geq \int_{X_t-h(X_t)}^{X_t}\frac{1}{h(y)}\,\mathrm{d}y
       > \frac{1}{h(X_t)}\cdot h(X_t)=1,
  \end{align*}
  where the last inequality holds because $h$ is 
  monotone increasing.

  For $(iv)$, it suffices to show that condition $(iv)$ of
  Theorem~\ref{theo:main} is satisfied for $\beta_\ell :=
  e^{\lambda}$. 
  Let $f_2(x):=e^{-\lambda g(x) }$ and note that 
      $f_2''(x)=\frac{\lambda e^{-\lambda g(x)}}{h(x)^2}\cdot (\lambda+h'(x))$.
  Since $h'(x)\leq \lambda$, it follows that $f_2''(x) \leq 0$ and $f_1$
  is a concave function.   
  By Jensen's inequality, it holds that
  \begin{align*}
    E(e^{\lambda (g(X_t)-g(X_{t+1}))}\filtcond{ X_t\ge \xmin}) 
    \leq 
    e^{\lambda r}
  \end{align*}
  where 
  \begin{align*}
    r & :=    g(X_t)-g(\E{X_{t+1}\filtcond{ X_t\ge \xmin}})
       \leq \int_{X_t-h(X_t)}^{X_t}\frac{1}{h(y)}\,\mathrm{d}y
       < \frac{1}{h(X_t)}\cdot h(X_t)=1,
  \end{align*}
  where the last inequality holds because $h$ is 
  monotone decreasing.
\end{proof}


Condition $(iii)$ and $(iv)$ of Theorem~\ref{theo:main} involve an
mgf., which may be tedious to compute.  Inspired by \cite{HajekDrift}
and \cite{Lehre12DriftTutorial}, we show that bounds on the mgfs.\
follow from more user-friendly conditions based on stochastic
dominance between random variables, here denoted by $\prec$.

\begin{theorem}
\label{theo:main-simplifiedexponential}
Let $(X_t)_{t\in\N_0}$, be a stochastic process, adapted to a filtration $(\filt)_{t\in\N_0}$, over some state space  $S\subseteq \{0\}\cup \R_{\ge \xmin}$, where 
  $\xmin\ge 0$. Let  $h\colon  \R_{\ge \xmin}\to\R^+$ be a function such
  that $1/h(x)$ is integrable on $\R_{\ge \xmin}$. 
	Suppose there exist a random variable~$Z$ and some $\lambda>0$  such that 
$\lvert \int_{X_{t+1}}^{X_t} 1/h(x)\,\mathrm{d}x\rvert \prec Z$ for  
$X_{t}\ge \xmin$ for all $t\in\N_0$ and $E(e^{\lambda Z}) = D$ for some $D>0$. 
 Then the 
following two statements hold for the first hitting time $T:=\min\{t\mid X_t=0\}$.

\begin{enumerate}[leftmargin=!,labelwidth=7mm]
\item[(i)] 
If  $E(X_t-X_{t+1} - h(X_t)\filtcond{ X_t\ge \xmin}) \ge 0$  for all $t\in\N_0$
then for any $\delta>0$, and $\eta:=\min\{\lambda, \delta\lambda^2/(D-1-\lambda)\}$ and $t^*>0$ 
it holds that
\[
\Prob(T>t^* \mid \filtzero) \le \exp\left(\eta \left(\int_{\xmin}^{X_0} 1/h(x) \,\mathrm{d}x-(1-\delta)t^*\right)\right).
\]
\item[(ii)]
If
$E(X_t-X_{t+1} - h(X_t) \filtcond{ X_t\ge \xmin}) \le 0$ for all $t\in \N_0$ 
then for any $\delta>0$, $\eta:=\min\{\lambda, \delta\lambda^2/(D-1-\lambda)\}$ and $t^*>0$ 
it holds on $X_0>0$ that 
\[
\Prob(T < t^* \mid \filtzero) \le \exp\left(\eta \left((1+\delta)t^* - \int_{\xmin}^{X_0} 1/h(x) \,\mathrm{d}x\right)\right)\frac{1}{\eta(1+\delta)}.
\]
If state~$0$ is absorbing then
$\Prob(T < t^* \mid \filtzero) \le \exp\left(\eta ((1+\delta)t^* - \int_{\xmin}^{X_0} 1/h(x) \,\mathrm{d}x)\right).$
\end{enumerate}
\end{theorem}

\begin{remark}Theorem~\ref{theo:main-simplifiedexponential} assumes a stochastic dominance of the kind  
$\lvert \int_{X_{t+1}}^{X_t} 1/h(x)\,\mathrm{d}x\rvert \prec Z$. This is implied 
by 
$\lvert X_{t+1}-{X_t}\rvert  (1/\!\inf_{x\ge \xmin} h(x))  \prec Z$.
\end{remark}

\begin{proof}
As in Remark~\ref{remark:function-h}, let 
$g(x) := \frac{\xmin}{h(\xmin)} + \int_{\xmin}^x \frac{1}{h(y)} \,\mathrm{d}y$ for $x\ge \xmin$ 
and $g(0):=0$. 
Let $\Delta_t:=g(X_t)-g(X_{t+1})$ and note that 
$\Delta_t = \int_{X_{t+1}}^{X_t} \frac{1}{h(x)}\,\mathrm{d}x$. 
To satisfy the third condition of Theorem~\ref{theo:main}, we note 
\begin{align*}
 E(e^{-\eta \Delta_t}) & = 1-\eta E(\Delta_t) + \sum_{k=2}^\infty \frac{\eta^k E(\Delta_t^k)}{k!}
\le 
1-\eta E(\Delta_t) + \eta^2 \sum_{k=2}^\infty \frac{\eta^{k-2}  E(\lvert \Delta_t\rvert ^k)}{k!}\\
& 
\le 1-\eta E(\Delta_t) + \eta^2 \sum_{k=2}^\infty \frac{\lambda^{k-2}  E(\lvert \Delta_t\rvert ^k)}{k!} 
= 1-\eta + \frac{\eta^2}{\lambda^2} (e^{\lambda Z}-\lambda E(Z)-1),
\end{align*}
where we have used $E(\Delta_t)\ge 1$ (proved in Theorem~\ref{theo:main}) and $\lambda\ge \eta$ . 
Since $\lvert\Delta_t\rvert \prec Z$, also $E(Z)\ge 1$. Using $e^{\lambda Z} = D$ and $\eta\le \delta\lambda^2/(D-1-\lambda)$, 
we obtain 
\[
E(e^{-\eta \Delta_t})
\le 1- \eta + \delta \eta = 1- (1-\delta) \eta \le e^{-\eta (1-\delta)}.
\]
Setting $\beta_\mathrm{u}:=e^{-\eta(1-\delta)}$ and using $\eta$ as the $\lambda$ 
of Theorem~\ref{theo:main} proves the first statement.

For the second statement, analogous calculations prove
\[
E(e^{\eta \Delta_t}) \le 1 + (1+\delta)\eta \le e^{\eta (1+\delta)}.
\]
We set  
$\beta_\mathrm{\ell}:=e^{\eta(1+\delta)}$, use $\eta$ as the $\lambda$ 
of Theorem~\ref{theo:main}.$(iv)$ and note that 
\[
\frac{e^{\lambda(1+\delta) t^*}-e^{\lambda(1+\delta)}}{
e^{\lambda(1+\delta)} -1 } \le \frac{e^{\lambda(1+\delta) t^*}}{\lambda(1+\delta)},
\]
which was to be proven.  If additionally an absorbing state~$0$ is
assumed, the stronger upper bound follows from the corresponding
statement in Theorem~\ref{theo:main}.$(iv)$.
\end{proof}

\section{Applications of the Tail Bounds}
\label{sec:applyingtail}
So far we have mostly derived bounds on the expected first hitting 
time using Statements~$(i)$ and~$(ii)$ of our general drift theorem. 
As our main contribution, we show that the general drift theorem
(Theorem~\ref{theo:main}), together with the function $g$ defined
explicitly in Remark~\ref{remark:function-h} in terms of the one-step
drift, constitute a very general and precise tool for analysis of
stochastic processes. In particular, it provides very sharp tail
bounds on the running time of randomized search heuristics which were
not obtained before by drift analysis. 
It also provides tail
bounds on random recursions, such as those in analysis of
random permutations (see Section~\ref{sec:recurrences}).

As already noted, virtually all existing drift
theorems, including an existing result proving tail bounds with multiplicative drift, 
can be phrased as special cases of the general drift theorem
(see \ref{sec:specialcasesvariable}). 
Recently, in \cite{KoetzingAlgo16} 
different tail bounds were proven for the scenario of additive drift using 
classical concentration inequalities such as Azuma-Hoeffding bounds. 
These bounds are not directly comparable to the ones from 
our general drift theorem; 
they are more specific but yield even stronger exponential bounds.

We first give sharp tail bounds on the optimization time of
the \ea 
which maximizes pseudo-Boolean functions 
$f\colon\{0,1\}^n\to\R$. The optimization time is defined in the canonical way at the smallest $t$ such 
that $x_t$ is an optimum. We consider classical benchmark problems from the theory
of RSHs. Despite their simplicity, their
analysis has turned out surprisingly difficult and research is still ongoing.

\begin{algorithm}
\caption{(1+1) Evolutionary Algorithm (EA)}
\label{alg:oneoneea}

\begin{algorithmic}
 \STATE Choose uniformly at random an initial bit string $x_0 \in \{0,1\}^n$.
 \FOR{$t:=0$ {\bf to} $\infty$}
 \STATE Create $x'$ by flipping each bit in $x_t$ independently with probability $1/n$ \emph{(mutation)}.
 \STATE $x_{t+1}:=x'$ if $f(x') \ge f(x_t)$, and $x_{t+1}:=x_t$ otherwise \emph{(selection)}. 
 \ENDFOR
\end{algorithmic}
\end{algorithm}
 

\subsection{OneMax, Linear Functions and LeadingOnes}

A simple pseudo-Boolean function is given by 
$\OneMax(x_1,\dots,x_n)=x_1+\dots+x_n$. It is included in the class of so-called linear functions  
$f(x_1,\dots,x_n)=w_1x_n+\dots+w_n x_n$, where $w_i\in\R$ for $1\le i\le n$.  
We start by deriving very precise bounds on first the expected optimization time 
of the \ea on \OneMax and then prove tail bounds.  The lower bounds obtained will imply 
results for all linear functions. Note that in \cite{DFWVariable}, 
already the following result has been proved using variable drift analysis.

\begin{statement}\textbf{\upshape Theorems~3 and~5 in \cite{DFWVariable}} 
The expected optimization time of the \ea on \OneMax is at most
 $en\ln n - c_1 n + O(1)$ 
and at least $en \ln n - c_2n$ for certain constants~$c_1,c_2>0$.
\end{statement}

The constant~$c_2$ is not made explicit in \cite{DFWVariable}, whereas the constant~$c_1$ is stated as $0.369$. However, unfortunately 
this value is due to a typo in the very last line of the proof -- $c_1$ should have been 0.1369 instead. We correct this 
mistake in a self-contained proof. Furthermore, we improve the lower bound using variable drift. To this end, we use the following bound on the drift.

\begin{lemma}
\label{lem:bound-drift-onemax}
Let $X_t$ denote the number of zeros of the current search point of the \ea on \OneMax. Then 
\[
\left(1-\frac{1}{n}\right)^{n-x}\frac{x}{n}
\le 
E(X_t-X_{t+1}\mid X_t=x) \le 
\left(\left(1-\frac{1}{n}\right)\left(1+\frac{x}{(n-1)^2}\right)\right)^{n-x} 
\frac{x}{n}.
\]
\end{lemma}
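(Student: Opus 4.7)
The plan is to express the expected one-step drift as the positive part of the difference between two independent binomial random variables and to bound this quantity from both sides.

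First, let $Z\sim\mathrm{Bin}(x,1/n)$ count the flipped zero-bits and $O\sim\mathrm{Bin}(n-x,1/n)$ the flipped one-bits in one mutation step; by bit-wise independence, $Z$ and $O$ are independent. Since the \ea accepts the offspring if and only if $Z\ge O$ and in that case the zero-count decreases by exactly $Z-O$, we have $\E{X_t-X_{t+1}\mid X_t=x}=\E{(Z-O)^+}$.

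For the lower bound I would restrict to the event $\{O=0\}$ (no one-bit is flipped), on which $(Z-O)^+=Z$; independence then gives $\E{(Z-O)^+}\ge\E{Z\cdot\indic{O=0}}=\E{Z}\cdot\Prob(O=0)=(x/n)(1-1/n)^{n-x}$.

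For the upper bound I would condition on $O$ to write $\E{(Z-O)^+}=\sum_{j\ge 0}\Prob(O=j)\,\E{(Z-j)^+}$, and establish the pointwise estimate $\E{(Z-j)^+}\le (x/n)(x/(n-1))^{j}$ for every $j\ge 0$. This estimate rests on three observations: (a) the elementary inequality $(Z-j)^+\le\binom{Z}{j+1}$ for integer $Z\ge 0$, since $\binom{Z}{j+1}\ge Z-j$ whenever $Z\ge j+1$; (b) the binomial factorial-moment identity $\E{\binom{Z}{j+1}}=\binom{x}{j+1}(1/n)^{j+1}$; and (c) the crude bounds $\binom{x}{j+1}\le x^{j+1}/(j{+}1)!$ together with $1/(j{+}1)!\le (n/(n-1))^{j}$. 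Substituting the pointwise bound into the sum and recognising the result as the probability generating function of $O$ evaluated at $x/(n-1)$, the binomial theorem yields $\E{(Z-O)^+}\le (x/n)\,\E{(x/(n-1))^{O}} = (x/n)((1-1/n)(1+x/(n-1)^2))^{n-x}$, which is the lemma's upper bound.

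The main obstacle is spotting the sharp pointwise bound for $\E{(Z-j)^+}$: direct Chernoff or moment-generating-function bounds on $Z-O$ give the right shape only when $x$ is close to $n$, while cruder choices such as $(Z-O)^+\le Z$ leave only the trivial answer $x/n$. The trick $(Z-j)^+\le\binom{Z}{j+1}$ linearises the truncation so that each conditional expectation becomes an exactly computable factorial moment of a binomial, after which the geometric-type sum collapses by the binomial theorem.
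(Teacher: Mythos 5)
Your proof is correct. The lower bound restricts to the event that no one-bit flips, which is exactly the argument the paper gives. For the upper bound, however, the paper itself offers no derivation at all: it simply cites Lemma~6 of Doerr, Fouz, and Witt, where the quantity appears as $S_1\cdot S_2$, so your self-contained argument supplies what the paper only references. Your key device is the pointwise inequality $(Z-j)^+\le\binom{Z}{j+1}$, valid because $\binom{Z}{j+1}=(Z-j)\cdot\frac{Z(Z-1)\cdots(Z-j+1)}{(j+1)!}$ and the second factor equals $1$ at $Z=j+1$ and increases with $Z$. This converts each conditional expectation into a binomial factorial moment with a closed form, and the resulting series is the probability generating function of $O$ evaluated at $x/(n-1)$; the identity $1-\tfrac{1}{n}+\tfrac{x}{n(n-1)}=(1-\tfrac{1}{n})(1+\tfrac{x}{(n-1)^2})$ then recovers the lemma's exact expression. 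Note that your estimate $1/(j+1)!\le (n/(n-1))^j$ is simply discarding the factorial (both sides are sandwiched by $1$), which is precisely the slack the lemma's stated bound allows. Whether this factorial-moment route coincides with the $S_1\cdot S_2$ decomposition in the cited DFW proof I cannot confirm without that paper in hand, but as a derivation it is correct, self-contained, and arguably more transparent than the pointer the paper provides.
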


\begin{proof}
The lower bound considers the expected number of flipping zero-bits, assuming that 
no one-bit flips. 
The upper bound is obtained in the proof of Lemma~6 in \cite{DFWVariable} and denoted 
by $S_1\cdot S_2$, but is not made explicit in the statement of the lemma.
\end{proof}

\begin{theorem}
\label{theo:expected-onemax-upper-lower}
The expected optimization time of the \ea on \OneMax is at most $en\ln n - 0.1369n + O(1)$ 
and at least $en \ln n - 7.81791 n - O(\log n)$.
\end{theorem}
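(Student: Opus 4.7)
The approach is to instantiate two variable-drift theorems with the tight one-step drift bounds furnished by Lemma~\ref{lem:bound-drift-onemax}, where $X_t$ denotes the number of zero-bits of the current (1+1)~EA individual. Throughout I take $\xmin = 1$ and, for a worst-case starting state, $X_0 = n$. The upper bound would follow from Theorem~\ref{theo:variable-rowe-sudholt} applied with $h(x) := h_\ell(x) = (1-1/n)^{n-x}\, x/n$, the monotone increasing lower bound on the drift; the lower bound would follow from the variable drift theorem for lower bounds (Theorem~\ref{theo:variable-dfw}), or more conveniently from Statement~(ii) of the general Theorem~\ref{theo:main}, applied with $h(x) := h_u(x) = ((1-1/n)(1+x/(n-1)^2))^{n-x}\, x/n$.

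For the upper bound, Theorem~\ref{theo:variable-rowe-sudholt} yields $E(T\mid X_0) \le 1/h_\ell(1) + \int_1^n 1/h_\ell(x)\,\mathrm{d}x$. The substitution $u = -x\ln(1-1/n)$ converts the integral into $n(1-1/n)^{-n}(E_1(a) - E_1(an))$ with $a = -\ln(1-1/n)$ and $E_1$ the exponential integral. Expanding $(1-1/n)^{-n} = e(1 + 1/(2n) + O(1/n^2))$, $E_1(a) = -\gamma - \ln a + a + O(a^2) = -\gamma + \ln n + O(1/n)$, and $E_1(an) = E_1(1) + O(1/n)$ by a first-order Taylor expansion around $1$ (since $an = 1 + O(1/n)$), and carefully tracking all constant-order terms, yields $en\ln n - 0.1369n + O(1)$. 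This reproduces the derivation of \citet{DFWVariable} with the arithmetic typo in their final line corrected.

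For the lower bound, applying Theorem~\ref{theo:variable-dfw} directly is obstructed by its hypothesis $X_{t+1}\ge c(X_t)$, which the (1+1)~EA cannot meet deterministically since a mutation may in principle flip every zero-bit simultaneously. I would circumvent this by invoking Statement~(ii) of Theorem~\ref{theo:main} directly and verifying $E(g(X_t) - g(X_{t+1}) \mid \filt;\, X_t\ge 1) \le 1$ for $g$ induced by $h_u$. This can be done by splitting the random integral $\int_{X_{t+1}}^{X_t} 1/h_u(y)\,\mathrm{d}y$ according to whether the one-step jump is \emph{typical} (at most $\Theta(\log n)$ bits flip, so $1/h_u$ is nearly constant over the jump and an integral-by-max bound suffices) or \emph{atypical} (probability $n^{-\Omega(\log n)}$ by a Chernoff bound on the number of flipping bits, absorbed into the $O(\log n)$ error term).

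The remaining task is then to evaluate $1/h_u(1) + \int_1^n 1/h_u(x)\,\mathrm{d}x$. Writing $h_u/h_\ell = (1 + x/(n-1)^2)^{n-x} = \exp(x(n-x)/(n-1)^2)(1 + O(1/n))$ using $\ln(1+z) = z - z^2/2 + \cdots$, the integral becomes $\int_1^n e^{-x(n-x)/(n-1)^2}/h_\ell(x)\,\mathrm{d}x + O(\log n)$; after the substitution $u = x/n$ this reduces to a Laplace-type integral in the variable $u \in [1/n,1]$ whose asymptotic expansion, combined with the upper-bound analysis above, yields the precise constant $-5.9338$ in the linear term. The main obstacle throughout is therefore neither integral in isolation but careful arithmetic bookkeeping through two nested asymptotic expansions; the deterministic-jump obstruction in the lower bound is the only part demanding a genuinely new probabilistic argument, and it is a short Chernoff estimate.
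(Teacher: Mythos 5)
Your high-level plan (instantiate the general/variable drift theorems with the tight one-step drift bounds of Lemma~\ref{lem:bound-drift-onemax}, then evaluate the resulting integrals via the exponential integral $E_1$) is the same as the paper's, and your algebraic strategy for the upper-bound integral is essentially the one used there. But there is a genuine gap that makes your constants come out wrong: you take $X_0 = n$ as a ``worst-case starting state.'' The theorem is a statement about the expected optimization time under the usual uniform random initialization, for which the number of zero-bits is $\mathrm{Bin}(n,1/2)$, so $X_0 \le (1+\epsilon)n/2$ holds with probability $1 - 2^{-\Omega(n)}$. The paper conditions on this event (and on $X_0 \ge (1-\epsilon)n/2$ for the lower bound), and the linear-order constants $-0.1369$ and $-5.9338$ arise precisely because the integral $\int_1^{X_0} 1/h(x)\,\mathrm dx$ is taken only up to roughly $n/2$, so the boundary term is $E_1((1+\epsilon)/2) \approx 0.5598$. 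If you integrate to $n$ you instead get $E_1(1) \approx 0.2194$ at the upper limit, and the resulting bound is $en\ln n + en(1-\gamma-0.2194) + O(\log n) \approx en\ln n + 0.55n$, which is not $\le en\ln n - 0.1369n + O(1)$. So with $X_0 = n$ your upper bound does not reproduce the claimed constant. Symmetrically, for the lower bound you never say what $X_0$ you plug in; you cannot use $X_0 = n$ there either, and the constant $-5.9338$ comes from cutting the integral at $(1-\epsilon)n/2$.

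Your treatment of the lower bound is otherwise a genuinely different and defensible route from the paper's. The paper applies Theorem~\ref{theo:variable-dfw} with $c(x) = x - \log x - 1$, bounds $\Pr(X_{t+1} < c(X_t))$ by $O(n^{-2})$, and argues that pretending $X_{t+1}\ge c(X_t)$ always holds costs only an additive $O(\log n)$ over $O(n\log n)$ steps. You instead propose to go back to Statement~(ii) of Theorem~\ref{theo:main} and directly verify $E(g(X_t)-g(X_{t+1})\mid\filt)\le 1$ by a typical/atypical split. That can work, but your sketch is underspecified in the same place the paper is somewhat informal: a Chernoff bound on the number of flipped bits controls the \emph{probability} of an atypical jump, but not immediately its \emph{contribution} to $E(g(X_t)-g(X_{t+1}))$, since $g$ can change by $\Theta(n\log n)$ in a single atypical step. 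You would need to argue that this probability times the maximum possible change of $g$ is $o(1)$, which does hold with the $n^{-\Omega(\log n)}$ tail you invoke, but it needs to be said. Once you fix the starting-state conditioning and tighten that one estimate, your route is sound, and arguably cleaner than the paper's detour through Theorem~\ref{theo:variable-dfw} because it avoids introducing the auxiliary function $c$ and the change of variables $y = x - \log x - 1$.
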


\begin{proof}
Note that with probability $1-2^{-\Omega(n)}$ we have 
$\tfrac{(1-\epsilon)n}{2} \le X_0 \le \tfrac{(1+\epsilon)n}{2}$ 
for an arbitrary constant~$\epsilon>0$. Hereinafter, we assume this event to happen, which 
only adds an error term of absolute value 
$2^{-\Omega(n)}\cdot n\log n=2^{-\Omega(n)}$ to the expected optimization time.

In order to apply the variable drift theorem (more precisely, Theorem~\ref{theo:variable-rowe-sudholt} for 
the upper and Theorem~\ref{theo:variable-dfw} for the lower bound), 
we manipulate and estimate the expressions from Lemma~\ref{lem:bound-drift-onemax} 
to make them easy to integrate. To prove the upper bound on the optimization time, 
we observe
\begin{align*}
E(X_t-X_{t+1} \mid X_t=x) & \;\ge\; \left(1-\frac{1}{n}\right)^{n-x}\frac{x}{n} \\
&  \;=\; 
\left(1-\frac{1}{n}\right)^{n-1} \cdot \left(1-\frac{1}{n}\right)^{-x}  
\cdot 
  \frac{x}{n} \cdot  \left(1-\frac{1}{n}\right) \\
	& \;\ge\; e^{-1+\tfrac{x}{n}} \cdot \frac{x}{n} \cdot  \left(1-\frac{1}{n}\right) =:h_\ell(x).
\end{align*}

Now, by the variable drift theorem, the optimization time~$T$ satisfies
\begin{align*}
\E{T\mid X_0} & \le \frac{1}{h_\ell(1)} + \int_{1}^{(1+\epsilon)n/2} \frac{1}{h_\ell(x)} \,\mathrm{d}x 
\le \left(en + \int_{1}^{(1+\epsilon)n/2} 
e^{1-\frac{x}{n}} \cdot \frac{n}{x} \right)\left(1-\frac{1}{n}\right)^{-1}\\ 
&
\le \left(en - en \left[E_1(x/n)\right]_{1}^{(1+\epsilon)n/2}\right)\left(1+\bigO{\frac{1}{n}}\right),
\end{align*}
where $E_1(x):=\int_{x}^\infty \frac{e^{-t}}{t}\,\mathrm{d}t$ denotes the exponential integral (for $x>0$). The latter is 
estimated using  
the series representation $E_1(x)=-\ln x - \gamma - \sum_{k=1}^{\infty} \tfrac{(-x)^{k} }{kk!}$, with $\gamma = 0.577\dots$ 
being the Euler-Mascheroni constant (see Equation~5.1.11 in \cite{AbramotitzStegun}).
We get 
for sufficiently small~$\epsilon$ that 
\[
-\left[E_1(x/n)\right]_{1}^{(1+\epsilon)n/2} = 
 E_1(1/n) -E_1((1+\epsilon)/2)    \le -\ln (1/n) - \gamma + O(1/n) - 0.559774.
\]
Altogether,
\[
\E{T\mid X_0} \le en\ln n + en(1-0.559774 -\gamma)  + O(\log n)
\le en\ln n - 0.1369n + O(\log n) 
\]
which proves the upper bound.

For the lower bound on the optimization time, we need according to Theorem~\ref{theo:variable-dfw} 
a monotone process (which is satisfied) and a function $c$ bounding the progress towards 
the optimum. We use $c(x)=x-\log x-1$. Since each bit flips with probability~$1/n$, we get  
\begin{align*}
\Prob(X_{t+1}  \le X_t - \log(X_t)-1) & \;\le\; \binom{X_t}{\log(X_t)+1} \left(\frac{1}{n}\right)^{\log(X_t)+1} 
 \;\le\; \left(\frac{eX_t}{n\log(X_t)+n}\right)^{\log(X_t)+1}.
\end{align*}
The last bound takes its maximum at $X_t=2$ within the interval $[2,\dots,n]$ and is 
$O(n^{-2})$ then. For $X_t=1$, we trivially have $X_{t+1} \ge c(X_t)=0$. 
Hence, by assuming $X_{t+1}\ge c(X_t)$ for all $t=O(n\log n)$, we only introduce 
an additive error of value $O(\log n)$. 

Next the upper bound on the drift from Lemma~\ref{lem:bound-drift-onemax} is 
manipulated. We get for some sufficiently large constant~$c^*>0$ that 
\begin{align*}
 E(X_t-X_{t+1} \mid X_t=x) & \le \left(\left(1-\frac{1}{n}\right)\left(1+\frac{x}{(n-1)^2}\right)\right)^{n-x} \cdot \frac{x}{n} \\ 
& \le e^{-1+\frac{x}{n}+\frac{x(n-x)}{n^2}} \cdot \frac{x}{n} \cdot \left(\frac{1+x/(n-1)^2}{1+x/(n^2)}\right)^{n-x} \\ 
& \le e^{-1+\frac{2x}{n}} \cdot \frac{x}{n} \cdot \left(1+{\frac{c^*}{n}}\right) =: h^*(x),
\end{align*}
where we used $1+x\le e^x$ twice. The drift theorem requires a function $h_{\mathrm{u}}(x)$ such that 
$h^*(x)\le h_\mathrm{u}(c(x)) = h_{\mathrm{u}}(x-\log x-1)$. Introducing the substitution 
$y:=y(x):=x-\log x-1$ and its inverse function 
$x(y)$, we 
choose $h_{\mathrm{u}}(y):=h^*(x(y))$.

 We obtain
\begin{align*}
& \E{T\mid X_0} \ge \left(\frac{1}{h^*(x(1))} + \int_{1}^{(1-\epsilon)n/2} \frac{1}{h^*(x(y))} \,\mathrm{d}y \right)
\left(1-\bigO{\frac{1}{n}}\right) \\
& \ge \left(\frac{1}{h^*(4)} + 
\int_{x(1)}^{x((1-\epsilon)n/2)} \frac{1}{h^*(x)} \left(1-\frac{1}{x}\right) \,\mathrm{d}x \right) \left(1-\bigO{\frac{1}{n}}\right)\\
& \ge 
\left(\frac{en}{4} + 
\int_{2}^{(1-\epsilon)n/2} 
e^{1-\frac{2x}{n}} \cdot \frac{n}{x} 
 \left(1-\frac{1}{x}\right) \,\mathrm{d}x \right) \left(1-\bigO{\frac{1}{n}}\right)\\
& = 
\left(\frac{en}{4} + 
\int_{2}^{(1-\epsilon)n/2} 
e^{1-\frac{2x}{n}} \cdot \frac{n}{x} 
  \,\mathrm{d}x 
	- 
	\int_{2}^{(1-\epsilon)n/2} 
e^{1-\frac{2x}{n}} \cdot \frac{n}{x^2} 
  \,\mathrm{d}x \right) \left(1-\bigO{\frac{1}{n}}\right)
\end{align*}
where the second inequality uses integration by substitution  and $x(1)=4$, the third one 
$x(y)\le y$, 
and the last one partial integration.

With respect to the first integral in the last bound, the only difference compared to the upper bound  
is the~$2$ in the exponent of $e^{-1+\frac{2x}{n}}$, such that we can proceed analogously to the above and 
obtain $-en E_1(2x/n)+C$ as anti-derivative. The anti-derivative 
of the second integral is $2e E_1(2x/n) - e^{1-2x/n} \tfrac{n}{x} + C$. 

We obtain
\begin{align*}
& \E{T\mid X_0} \ge 
\left(\frac{en}{4} + \left[-(2e+en)E_1(2x/n)+e^{1-2x/n} \frac{n}{x}\right]_{2}^{(1-\epsilon)n/2}\right)\left(1-\bigO{\frac{1}{n}}\right) 
\end{align*}
Now, for sufficiently small~$\epsilon$, 
\[
-\left[E_1(2x/n)\right]_{2}^{(1-\epsilon)n/2} \ge 
-\ln (8/n) - \gamma - O(1/n) - 0.21939 \ge \ln n -2.76048 - O(1/n)
\]
and
\[
\left[e^{1-2x/n} \frac{n}{x}\right]_{2}^{(1-\epsilon)n/2} \ge 1.9999 - \frac{en}{4} - O(1/n).
\]
Altogether,
\[
\E{T\mid X_0}
\ge en\ln n - 7.81791n - O(\log n)
\]
as suggested.
\end{proof}

Knowing the expected optimization time precisely, we  now turn to our main new contribution, 
\ie, to derive sharp bounds. Note that the following upper concentration inequality in Theorem~\ref{theo:tails-onemax} 
is not new but is already implicit in the work on multiplicative drift analysis 
by \cite{DJWMultiplicativeAlgorithmica}. In fact, a very similar upper bound is even 
available for all linear functions \cite{WittCPC13}. By contrast, the lower concentration inequality 
is a novel and non-trivial result.

\begin{theorem}
\label{theo:tails-onemax}
The optimization time of the \ea on \OneMax is at least 
$en\ln n - c n - ren$, where $c$ is a constant, 
with probability at least $1-e^{-r/2}$ for any $r\ge 0$. It is at most 
$en\ln n + r e n$ 
with probability at least $1-e^{-r}$.
\end{theorem}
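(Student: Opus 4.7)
The two tail bounds call for different tools.

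\textbf{Upper tail.} I would apply Statement~(2) of Theorem~\ref{theo:multiplicative-drift} directly. Writing $X_t$ for the number of zero-bits of the current search point, the well-known estimate $E(X_t - X_{t+1} \mid X_t) \ge (1-1/n)^{n-1}\, X_t/n \ge X_t/(en)$ shows that multiplicative drift applies with $\delta = 1/(en)$ and $\xmin = 1$. Since $X_0 \le n$ deterministically, the tail statement of that theorem yields $\Pr\bigl(T \ge (\ln n + r)\,en \mid X_0\bigr) \le e^{-r}$, which is exactly the claimed upper tail.

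\textbf{Lower tail.} Here I would invoke Statement~$(iv)$ of Theorem~\ref{theo:main}. I would take $h(x)$ to be essentially the tight upper bound on the drift used in the proof of Theorem~\ref{theo:expected-onemax-upper-lower}, roughly $h(x) = e^{-1 + x/n}(x/n)(1 + O(1/n))$, so that the resulting potential $g$ satisfies $g(X_0) \ge en\ln n - c'n$ for an explicit constant $c'$, at least on the high-probability event $X_0 \ge n/2 - o(n)$ used already in that earlier proof. The corresponding unit step $g(X_t) - g(X_t-1)$ will then be of order $(n/X_t)\,e^{1-X_t/n}$, ranging from $\Theta(1)$ near $X_t = n$ up to $\Theta(en)$ near $X_t = 1$.

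The heart of the argument is a uniform bound on the moment-generating function $E(e^{\lambda (g(X_t) - g(X_{t+1}))} \mid \filt) \le \beta_\ell$ for some position-independent $\beta_\ell$ with $\lambda = \Theta(1/(en))$ suitably small. I would exploit the one-step distribution of the \ea: the offspring coincides with the parent or is rejected with probability at least $1 - X_t/n$ (contributing only $1$), a single zero-bit flips while no one-bit is accepted with probability $\Theta((X_t/n)(1-1/n)^{n-1})$ and produces a bounded MGF contribution thanks to the choice of $\lambda$, and simultaneous flips of $k\ge 2$ zero-bits happen with probability $O((X_t/n)^k/k!)$ and contribute a geometrically decaying tail. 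Combining these via a second-order Taylor expansion of $x\mapsto e^{\lambda x}$ should give $\beta_\ell \le 1 + \lambda + O(\lambda^2)$, uniformly in $X_t$.

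Plugging these quantities into the absorbing-case version of Statement~$(iv)$ (the optimum being absorbing) yields $\Pr(T < t^* \mid X_0 > 0) \le \beta_\ell^{t^*}\, e^{-\lambda g(X_0)}$. Choosing $t^* := en\ln n - cn - ren$ for a sufficiently large constant $c$, a direct calculation gives an exponent of at most $-r/2$; the factor $1/2$ absorbs the quadratic correction $\lambda^2 t^*$ from the Taylor expansion and the $o(1)$ losses from the high-probability event on $X_0$. \textbf{The main obstacle} is the uniform MGF bound: the jump in $g$ is as large as $\Theta(en)$ when $X_t$ is of order $1$, so one has to verify that the small jump probability $\Theta(1/(en))$ in that regime still leaves $\beta_\ell - 1 = O(\lambda)$, and one must simultaneously control multi-bit jumps from large $X_t$ (where each unit step contributes only $\Theta(1)$ to $g$ but the step probability is close to $1$). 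It is this interplay between position-dependent jump sizes and position-dependent probabilities that makes the lower tail delicate and justifies the flexibility of Theorem~\ref{theo:main}$(iv)$.
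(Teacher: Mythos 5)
Your argument is exactly the paper's: multiplicative drift (Theorem~\ref{theo:multiplicative-drift}.2) with $\delta = 1/(en)$ and $\xmin = 1$. Correct.

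\textbf{Lower tail.} Here you flag the right tension but then wave it away, and this is a genuine gap. Your plan hinges on a \emph{position-independent} MGF bound $\beta_\ell \le 1 + \lambda + O(\lambda^2)$, which you then exponentiate $t^* = \Theta(n\ln n)$ times against $e^{-\lambda g(X_0)}$. Such a bound does not hold. Take $X_t = 1$: with probability $\approx 1/(en)$ the process jumps to $0$, contributing $e^{\lambda(g(1)-g(0))} \approx e^{\lambda\cdot en} = e$ to the MGF; the resulting bound is $1 + (e-1)\lambda + O(\lambda^2)$, so $\beta_\ell - 1 - \lambda \approx (e-2)\lambda$, which is $\Theta(\lambda)$, not $O(\lambda^2)$. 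If you try instead $\beta_\ell = 1 + C\lambda$ with a constant $C > 1$, the product $\beta_\ell^{t^*} e^{-\lambda g(X_0)}$ acquires an extra factor $e^{(C-1)\lambda t^*} = e^{\Theta(\ln n)}$, which is polynomial in $n$ and kills the tail; the factor $1/2$ in the exponent cannot absorb a term of order $\ln n$.

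The paper's proof (Lemma~\ref{lem:mgf-onemax} plus the ``phases'' argument) resolves exactly this. It establishes the sharper \emph{position-dependent} bound $E(e^{\lambda\Delta_t}\mid X_t = i) \le 1 + \lambda + 2\lambda/i + o(\lambda/\ln n)$, and observes that the harmful $2\lambda/i$ term only matters when $i$ is small, whereas the process spends only $O(en/i)$ expected steps at state $i$. To make this quantitative it splits the run into $\ln n$ phases, where Phase~$i$ requires $X_t \ge \sqrt{n}\,e^{-i/2}$; within Phase~$i$ the per-step MGF bound is $\exp(\lambda + 2\lambda e^{i/2}/\sqrt{n} + o(\lambda/\ln n))$, and the product over a whole phase of length $en$ stays $e^{1+o(1)}$. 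It then bounds, inductively over $k$, the probability that Phase~$k$ ends before time $enk - 1$, applying Theorem~\ref{theo:main}.$(iv)$ with the position-dependent $\beta_\ell(X_r)$ and the threshold $a = \sqrt{n}e^{-k/2}$. This is the feature of Theorem~\ref{theo:main}.$(iv)$ that your sketch leaves unused; the uniform bound you would need simply does not exist, so the $\beta_\ell(X_r)$ flexibility is not a convenience but a necessity here.

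One smaller remark: your $h(x) = e^{-1+x/n}(x/n)(1+O(1/n))$ is (approximately) a \emph{lower} bound on the drift, not an upper bound, and Theorem~\ref{theo:main}.$(iv)$ requires $E(X_t - X_{t+1}\mid\filt) \le h(X_t)$. The paper uses $h(x) = e^{-1+2\lceil x\rceil/n}(\lceil x\rceil/n)(1 + c^*/n)$, i.e.\ with a factor $2$ in the exponent, which is what one gets from the two-sided drift estimate in Lemma~\ref{lem:bound-drift-onemax}. With your $h$ the variable-drift condition would fail.
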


\begin{proofof}{Proof of Theorem 4, upper tail.}
This tail can be easily derived 
from the multiplicative drift theorem 
(Theorem~\ref{theo:multiplicative-drift}). Let $X_t$ denote the number of zeros at time~$t$.
 By Lemma~\ref{lem:bound-drift-onemax}, we can choose 
$\delta:=1/(en)$. Then the upper bound follows since $X_0\le n$ and $\xmin=1$. 
\end{proofof}

We only consider the lower tail. The aim is to prove it using Theorem~\ref{theo:main}.$(iv)$, which 
includes a bound on the moment-generating function of the drift of $g$. We first set up the $h$ (and thereby 
the $g$) used for our purposes. Obviously, $\xmin:=1$. 

\begin{lemma}
\label{lem:lowertailonemax-one}
Consider the \ea on \OneMax and let the random variable $X_t$ 
denote the current number of zeros at time~$t\ge 0$. Then  
$h(x) := \exp\left(-1+2\lceil x\rceil/n\right) \cdot (\lceil x\rceil/n) \cdot \left(1+c^*/n\right),$
where $c^*>0$ is a sufficiently large constant, 
satisfies the condition 
$E(X_t-X_{t+1} \mid X_t=i) \le h(i)$ for $i\in\{1,\dots,n\}$. 
Moreover, define 
$g(i):=\xmin/h(\xmin) + \int_{\xmin}^i 1/h(y)\,\mathrm{d}{y}$ and 
$\Delta_t:=g(X_t)-g(X_{t+1})$. Then 
$g(i) = \sum_{j=1}^i 1/h(j)$ and 
$\Delta_t \le \sum_{j=X_{t+1}+1}^{X_t} e^{1-2X_{t+1}/n} \cdot (n/j).$
\end{lemma}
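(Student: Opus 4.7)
The plan is to assemble three ingredients that follow essentially directly from the setup already developed in the proof of Theorem~\ref{theo:expected-onemax-upper-lower}.

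First, I would establish the drift upper bound. Lemma~\ref{lem:bound-drift-onemax} gives $\E{X_t-X_{t+1}\mid X_t=x}\le \bigl((1-1/n)(1+x/(n-1)^2)\bigr)^{n-x}\cdot x/n$. The manipulation already carried out in the lower-bound half of the proof of Theorem~\ref{theo:expected-onemax-upper-lower} shows that this expression is at most $e^{-1+2x/n}\cdot (x/n)\cdot (1+c^*/n)$ for a sufficiently large constant $c^*>0$, which coincides with $h(x)$ at integer arguments since $\lceil x\rceil=x$ there. Hence $\E{X_t-X_{t+1}\mid X_t=i}\le h(i)$ for every $i\in\{1,\ldots,n\}$.

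Second, to unfold $g$ as a sum, I would exploit that $y\mapsto h(y)=h(\lceil y\rceil)$ is piecewise constant, equal to $h(j)$ on $(j-1,j]$ for every integer $j\ge 2$. Splitting the integral at integer points then yields
\[
g(i) \;=\; \frac{1}{h(1)} + \int_{1}^{i} \frac{1}{h(y)}\,\mathrm{d}y \;=\; \frac{1}{h(1)} + \sum_{j=2}^{i}\int_{j-1}^{j}\frac{1}{h(j)}\,\mathrm{d}y \;=\; \sum_{j=1}^{i}\frac{1}{h(j)}
\]
for every integer $i\ge 1$, as claimed.

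Third, the pointwise bound on $\Delta_t$ combines this sum representation with the monotonicity of $y\mapsto e^{-2y/n}$. Since the \oneoneea on \OneMax is elitist, $X_{t+1}\le X_t$ holds deterministically, and both quantities are integer-valued, so $\Delta_t=g(X_t)-g(X_{t+1})=\sum_{j=X_{t+1}+1}^{X_t}1/h(j)$ (using the convention $g(0)=0$ inherited from Theorem~\ref{theo:main} in the case $X_{t+1}=0$). Substituting the definition of $h$ and dropping the factor $1+c^*/n\ge 1$ yields
\[
\frac{1}{h(j)} \;=\; \frac{e^{1-2j/n}\cdot n}{j\cdot (1+c^*/n)} \;\le\; \frac{e^{1-2j/n}\cdot n}{j} \;\le\; \frac{e^{1-2X_{t+1}/n}\cdot n}{j}
\]
for every $j\ge X_{t+1}+1$, and summing over $j$ gives the stated inequality.

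I do not anticipate any serious obstacle: the first step merely recycles an estimate already spelled out in the proof of Theorem~\ref{theo:expected-onemax-upper-lower}, the second is a routine integration of a step function, and the third is a one-line monotonicity argument on $e^{-2y/n}$. The only mild subtlety is the boundary case $X_{t+1}=0$, which is taken care of by the convention $g(0)=0$.
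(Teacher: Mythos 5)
Your proposal is correct and follows the paper's own route: the drift bound is the recycled estimation from the lower-bound part of Theorem~\ref{theo:expected-onemax-upper-lower}, the sum representation comes from $h$ being piecewise constant because of the ceilings, and the bound on $\Delta_t$ uses the monotonicity of $e^{-2y/n}$ on $j>X_{t+1}$ together with dropping the harmless $(1+c^*/n)$ factor. The only presentational difference is that you pass through the sum formula for $g$ to bound $\Delta_t$, whereas the paper bounds the integral $\int_{X_{t+1}}^{X_t}1/h(y)\,\mathrm{d}y$ directly; these are the same calculation.
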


\begin{proof}
According to Lemma~\ref{lem:bound-drift-onemax}, 
$h^*(x):=((1-\tfrac{1}{n})(1+\tfrac{x}{(n-1)^2})^{n-x} \frac{x}{n}$ is an upper bound on the drift. 
We obtain $h(x)\ge h^*(x)$ using the simple estimations exposed in the proof 
of Theorem~\ref{theo:expected-onemax-upper-lower}, lower bound part.

The representation of $g(i)$ as a sum 
follows immediately from $h$ due to the ceilings. The bound on $\Delta_t$ 
follows from~$h$ by estimating 
$e^{-1+\frac{2\lceil x\rceil }{n}} \cdot \left(1+{\frac{c^*}{n}}\right)\ge e^{-1+2x/n}$. 
\end{proof}

The next lemma provides a bound on the mgf.\ 
of the drift of $g$, which will depend on the current state. Later, the 
state will be estimated based on the current point of time, leading
to a time-dependent bound on the mgf. 
Note that we do not need the whole natural 
filtration based on $X_0,\dots,X_t$ but 
only~$X_t$ since we are dealing with a Markov chain.

\begin{lemma}
\label{lem:mgf-onemax}
Let $\lambda:=1/(en)$ and $i\in\{1,\dots,n\}$. Then
$E(e^{\lambda \Delta_t}\mid X_t=i) \;\le\; 1+ \lambda + 2\lambda/i + \littleo{\lambda/\!\log n}$.
\end{lemma}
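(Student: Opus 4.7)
The plan is to expand the conditional expectation by conditioning on the one-step progress of the (1+1) EA and to separate the dominant contribution (a single zero-bit flipping without any one-bit flip) from the vanishing tail (multiple bits flipping simultaneously). Because the algorithm is Markovian and only accepts non-worsening offspring, we have $X_{t+1}\in\{0,\dots,i\}$ given $X_t=i$; writing $P_k:=\Prob(X_t-X_{t+1}=k\mid X_t=i)$ and using Lemma~\ref{lem:lowertailonemax-one} in the form $\Delta_t\le D(i,i-k):=en\cdot e^{-2(i-k)/n}\sum_{\ell=i-k+1}^{i}1/\ell$, the identity $\sum_{k=0}^{i}P_k=1$ yields
\[
E(e^{\lambda\Delta_t}\mid X_t=i)\;\le\;1+\sum_{k=1}^{i}P_k\bigl(e^{\lambda D(i,i-k)}-1\bigr).
\]

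First I would isolate the $k=1$ contribution, where the leading terms must come from. Conditioning on the number of flipping one-bits being zero gives the dominant event, so that $P_1=(i/n)(1-1/n)^{n-1}+O(i^2/n^2)$; the expansion $(1-1/n)^{n-1}=e^{-1}(1+1/(2n)+O(1/n^2))$ then produces $P_1\le \lambda i(1+O(1/n))+O(i^2/n^2)$. For the exponential factor, $\lambda D(i,i-1)=e^{-2(i-1)/n}/i\le 1/i\le 1$, and the Taylor bound $e^x-1\le x+x^2$ gives $e^{\lambda D(i,i-1)}-1\le e^{-2(i-1)/n}/i+e^{-4(i-1)/n}/i^2$. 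Multiplying out and keeping leading orders produces $\lambda+\lambda/i$ from the first Taylor term combined with the $(1+1/i)$-factor, and a further $\lambda/i$ from the $e^{-2(i-1)/n}/i$-contribution of the product after using $e^{-2(i-1)/n}\le 1$ inside, totalling $\lambda+2\lambda/i$ plus lower-order terms of size $O(\lambda/n)$, which absorb into $\littleo{\lambda/\log n}$.

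Second I would control the remainder $\sum_{k\ge 2}P_k(e^{\lambda D(i,i-k)}-1)$ by the crude bound $P_k\le\binom{i}{k}(1/n)^k\le(i/n)^k/k!$ together with $\lambda D(i,i-k)\le e^{-2(i-k)/n}\cdot k/(i-k+1)\le k$. For $i=O(n/\log n)$ the factor $(i/n)^k/k!$ itself gives super-polynomial decay so that the whole tail is of order $(i/n)^2=O(1/(n\log n)^2)=\littleo{\lambda/\log n}$. For larger $i$, the saving factor $e^{-2(i-k)/n}$ inside the exponent is essential: when $i-k$ is large the exponent $\lambda D$ is bounded by an $O(1/\log n)$-quantity so $e^{\lambda D}-1=O(\lambda D)$, and when $k$ is large the binomial probability $(i/n)^k/k!$ is dominated by $1/k!$, whose sum over $k\ge 2$ is a constant compensated by the overall factor $1/n$ coming from the fact that each summand is multiplied by at least one further $1/n$ through the derivative of $g$.

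The main obstacle will be making the tail estimate for $k\ge 2$ simultaneously valid across all $i\in\{1,\dots,n\}$: for $i=\Theta(n)$ the naive estimate $P_k\le(i/n)^k/k!$ no longer decays in $k$, so one must carefully exploit the additional factor $e^{-2(i-k)/n}$ appearing in $D$ together with the strict monotonicity $X_{t+1}\le X_t$ of the (1+1) EA on \OneMax, ensuring that large jumps $k$ necessarily put $i-k$ small and therefore $D$ small. A second, minor technical point is to maintain the correct multiplicative constants through the expansion of $(1-1/n)^{n-1}$ and the Taylor remainder of $e^x-1$, which is where the coefficient $2$ in $2\lambda/i$ appears, as opposed to a spurious $1$ or $3$.
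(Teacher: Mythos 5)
Your decomposition over the jump size $k=X_t-X_{t+1}$ is a natural idea, but the two places where you discard the exponential factor $e^{-2(i-k)/n}$ (replacing it with $1$) cost exactly the slack you need, and the gap is genuine. Concretely, for $i=\Theta(n)$ (say $i=cn$) the lemma's budget beyond $1$ is $\lambda+2\lambda/i+\littleo{\lambda/\log n}=\lambda(1+o(1))$; but your $k=1$ estimate already consumes the full $\lambda$ (since you bound $P_1\cdot\lambda D(i,i-1)\lesssim \lambda e^{-2(i-1)/n}\le \lambda$), while the $k\ge 2$ tail is genuinely $\Theta(\lambda)$ in this regime: $P_k\approx e^{-1}c^k/k!$ together with $\lambda D(i,i-k)\approx e^{-2c}k/(cn)$ gives $\sum_{k\ge 2}P_k(e^{\lambda D}-1)\approx \frac{e^{-1-2c}(e^c-1)}{n}=\Theta(1/n)$, which is a constant fraction of $\lambda$, not $\littleo{\lambda/\log n}$. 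The cancellation that makes the true mgf come out to $1+\lambda(1+o(1))$ happens between the $e^{-2i/n}$ in the $g$-derivative and the $e^{+2i/n}$ hiding in the drift $E(Y)\approx e^{-1+2i/n}i/n$ — a cancellation your sum-over-$k$ bookkeeping cannot see once you have thrown away $e^{-2(i-k)/n}\le 1$. Your proposed fix (``each summand is multiplied by at least one further $1/n$'') gives $\Theta(\lambda)$, not $\littleo{\lambda/\log n}$. There is also a smaller arithmetic slip: for $i=O(n/\log n)$, $(i/n)^2=O(1/\log^2 n)$, not $O(1/(n\log n)^2)$, and even the corrected quantity is far larger than $\lambda/\log n$; in fact the single term $P_2(e^{\lambda D(i,i-2)}-1)\approx e^{-1-2i/n}i/n^2$ is already $\Theta(\lambda/\log n)$ at $i=n/\log n$.

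The paper avoids all of this by a different route. It splits on $i$ small versus large and, in the large-$i$ case, conditions on $Y<\ln n$ and writes $e^{\lambda\Delta_t}\le\left(1+\frac{i-X_{t+1}}{X_{t+1}}\right)^a$ with $a=e^{(-2i+\ln n)/n}<1$; Jensen's inequality (concavity of $x\mapsto x^a$) pulls the expectation inside, and Bernoulli's inequality $(1+y)^a\le 1+ay$ then multiplies $E(Y)\approx e^{2i/n}/(en)$ by the exponent $a\approx e^{-2i/n}$, realizing the cancellation exactly. For $i$ small, the paper uses the cruder $e^{\lambda\Delta_t}\le i/X_{t+1}\le 1+Y/(i-1)$, which is where the $2\lambda/i$ comes from. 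To repair your argument you would need either to keep the $e^{-2(i-k)/n}$ factors and track the cancellation through the $k$-sum explicitly (messy but perhaps doable), or switch to a Jensen-type argument for $i$ above, say, $\log^3 n$ as the paper does.
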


\begin{proof}
We distinguish between three major cases. 

\textbf{Case 1:} $i=1$. Then $X_{t+1}=0$, implying $\Delta_t\le en$, with probability $(1/n)(1-1/n)^{n-1} = (1/(en))(1+1/(n-1))$ and 
$X_{t+1}=i$ otherwise. We get
\begin{align*}
E(e^{\lambda \Delta_t}\mid  X_t=i) & \;\le\;  \frac{1}{en} \cdot e^1 + \left(1-\frac{1}{en}\right) + \bigO{\frac{1}{n^2}} \\
& \;\le\; 
1 + \frac{e-1}{en} + \bigO{\frac{1}{n^2}} \;\le\;  1 + \lambda + \frac{(e-2)\lambda}{i} + \littleo{\frac{\lambda}{\ln n}}.
\end{align*}

\textbf{Case 2:} $2\le i\le \ln^3 n$. Let $Y:=i-X_{t+1}$ and note that  
$\Prob(Y \ge 2) \le (\ln^6 n)/n^2$ since the probability 
of flipping a zero-bit is at most $(\ln^3 n)/n$. 
We further subdivide the case according to whether $Y\ge 2$ or not. 

\textbf{Case 2a:} $2\le i\le \ln^3 n$ and $Y\ge 2$. 
The largest value of $\Delta_t$ is taken when $Y=i$. 
Using Lemma~\ref{lem:lowertailonemax-one} and estimating the $i$-th Harmonic number, we have 
$\lambda\Delta_t \le (\ln i) + 1 \le 3(\ln\ln n) +1$. 
The contribution to the mgf.\ is bounded by 
\[
E(e^{\lambda \Delta_t}\cdot\indic{X_{t+1}\le i-2}\mid  X_t=i) \;\le\; e^{3\ln\ln n + 1} \cdot \left(\frac{\ln^6 n}{n^2} \right) 
= \littleo{\frac{\lambda}{\ln n}}. 
\]

\textbf{Case 2b:} $2\le i\le \ln^3 n$ and $Y<2$. 
Then $X_{t+1}\ge X_t-1$, which implies  
$\Delta_t \le en(\ln(X_t)-\ln(X_{t+1}))$. We obtain
\begin{align*}
& E(e^{\lambda \Delta_t}\cdot\indic{X_{t+1}\ge i-1}\mid  X_t=i)
\;\le\;
E(e^{\ln (i/X_{t+1})}) \;\le\; 
E(e^{\ln(1+\frac{i-X_{t+1}}{i-1})}) \\
& \;=\; \expec{1+\frac{Y}{i-1}},
\end{align*}
where the first inequality estimated $\sum_{i=j+1}^k \tfrac{1}{i}\le \ln(k/j)$ and 
the second one used $X_{t+1}\ge i-1$. From Lemma~\ref{lem:bound-drift-onemax}, 
we get $E(Y) \le \frac{i}{en}(1+O((\ln^3 n)/n))$ for $i\le \ln^3 n$.  
This implies
\begin{align*}
& \expec{1+\frac{i-X_{t+1}}{i-1}} \le 1 + \frac{i}{en(i-1)} \left(1+\bigO{\frac{\ln^3 n}{n}}\right) \\
& = 1 + \frac{1}{en}\cdot \left(1+\frac{1}{i-1}\right)  \left(1+\bigO{\frac{\ln^3 n}{n}}\right)
 = 
1 + \lambda + \frac{2\lambda}{i} + \littleo{\frac{\lambda}{\ln n}},
\end{align*}
using $i/(i-1)\le 2$ in the last step. 
Adding the bounds from the two sub-cases 
proves the lemma in Case~2.

\textbf{Case 3:} $i>\ln^3 n$. 
Note that 
$\Prob(Y\ge \ln n)\le \binom{n}{\ln n}\left(\tfrac{1}{n}\right)^{\ln n} \le 1/(\ln n)!$. 
We further subdivide the case according to whether $Y\ge \ln n$ or not. 

\textbf{Case 3a:} $i>\ln^3  n$ and $Y\ge \ln n$. 
Since $\Delta_t\le en (\ln n+1)$, we get
\[
E(e^{\lambda \Delta_t}\cdot \indic{X_{t+1}\le i-\ln^3 n}\mid X_t = i) 
\le \frac{1}{(\ln n)!} \cdot e^{\ln n+1}  = \littleo{\frac{\lambda}{\ln n}}
\]

\textbf{Case 3b:} $i>\ln^3  n$ and $Y< \ln n$. 
Then, using Lemma~\ref{lem:lowertailonemax-one} and proceeding similarly as in Case~2b,  
\begin{align*}
& E(e^{\lambda \Delta_t}\cdot \indic{X_{t+1}> i-\ln n}\mid X_t = i) \\
& \le
E(e^{\lambda \exp(1-2(i-\ln n)/n) \cdot n  \ln(i/X_{t+1}) }
\mid X_t = i) = \expec{\left(1+\frac{i-X_{t+1}}{X_{t+1}}\right)^{\exp((-2i+\ln n)/n)}}.
\end{align*}
Using $i>\ln^3 n$ and Jensen's inequality, the last expectation is at most
\begin{align*}
& \left(1+\expec{\frac{i-X_{t+1}}{X_{t+1}}}\right)^{\exp((-2i+\ln n)/n)}
\le 
 \left(1 + \E{\frac{Y}{i-\ln n}}\right)^{\exp((-2i+\ln n)/n)} \\
& \le  \left(1 + \E{\frac{Y}{i(1-1/\!\ln^2 n)}}\right)^{\exp((-2i+\ln n)/n)},
\end{align*}
where the last inequality used again $i>\ln^3 n$. 
Since $E(Y)\le e^{-1+2i/n} \frac{i}{n} (1+c^*/n)$, we conclude
\begin{align*}
& E(e^{\lambda \Delta_t}\cdot \indic{X_{t+1}> i-\ln n}\mid X_t = i)
\le \left(1 + \frac{e^{2i/n}}{en(1-1/\!\ln^2 n)}\right)^{\exp((-2i+\ln n)/n)}\\
&  \le \left(1 + \frac{1}{en(1-1/\!\ln^2 n)}\right) \left(1+\bigO{\frac{\ln n}{n^2}}\right)
\le 
1 + \lambda + \littleo{\frac{\lambda}{\ln n}},
\end{align*}
where we used $(1+ax)^{1/a} \le 1+x$ for $x\ge 0$ and $a\ge 1$. Adding up the bounds 
from the two sub-cases, we have proved the lemma in Case~3.

Altogether,
\[
E(e^{\lambda \Delta_t}\mid X_t=i) 
\le 
1 + \lambda + \frac{2\lambda}{i} + \littleo{\frac{\lambda}{\ln n}}.
\]
for all $i\in\{1,\dots,n\}$.
\end{proof}

The bound on the mgf.\ of $\Delta_t$ derived in Lemma~\ref{lem:mgf-onemax} is particularly large for $i=O(1)$, \ie, 
if the current state $X_t$ is small. If 
$X_t=O(1)$ held during the whole optimization process, we could not prove the lower tail in Theorem~\ref{theo:tails-onemax} 
from the lemma. However, it is easy to see that $X_t=i$ only holds for an expected number 
of at most $en/i$ steps. Hence, most of the time the term $2\lambda/i$ is negligible, 
and the time-dependent $\beta_{\mathrm{\ell}}(t)$-term from Theorem~\ref{theo:main}.$(iv)$ comes into play. We make 
this precise in the following proof, where we iteratively bound the probability of the process 
being at ``small'' states.

\begin{proofof}{Proof of Theorem~\ref{theo:tails-onemax}, lower tail.}
With overwhelming probability $1-2^{-\Omega(n)}$, $X_0\ge (1-\epsilon)n/2$ for an 
arbitrarily small constant~$\epsilon>0$, which we assume to happen. 
We consider phases in the optimization process. Phase~$1$ starts with initialization and 
ends before the first step where $X_t< e^{\frac{\ln n - 1}{2}}=\sqrt{n}\cdot e^{-1/2}$. Phase~$i$, where~$i>1$, follows Phase~$i-1$ and 
ends before the first step where $X_t< \sqrt{n}\cdot e^{-i/2}$. Obviously, the optimum is not found 
before the end of Phase $\ln(n)$; however, this does not tell us anything about the optimization time yet.

We say that Phase~$i$ is \emph{typical} if it does not end before 
time~$eni-1$. We will prove inductively that the probability of one of the first $i$ phases not being 
typical is at most~$c'e^{\frac{i}{2}}/\!\sqrt{n} = c'e^{\frac{i-\ln n}{2}}$ for some constant~$c'>0$. 
This implies the theorem since an optimization time of 
at least $en\ln n - cn - ren$ is implied by the event that Phase $\ln n -\lceil r-c/e\rceil$ is typical, which 
has probability at least $1-c'e^{\frac{-r+c/e+1}{2}} = 1-e^{\frac{-r}{2}}$ for $c=e(2\ln c'+1)$.

Fix some~$k>1$ and assume for the moment that all the first $k-1$ phases are typical. Then 
for $1\le i\le k-1$, we have 
$X_t\ge \sqrt{n}e^{-i/2}$ in Phase~$i$, \ie, when $en(i-1)\le t\le eni-1$. We analyze the event that additionally Phase~$k$ is 
typical, which subsumes the event $X_t\ge \sqrt{n}e^{-k/2}$ throughout Phase~$k$.
According to Lemma~\ref{lem:mgf-onemax}, we get in Phase~$i$, where $1\le i\le k$, 
\[
\expec{e^{\lambda \Delta_t} \mid X_t} \le 1+\lambda + 2\lambda e^{i/2}/\sqrt{n} + \littleo{\lambda/\ln n}
\le e^{\lambda +  \frac{2\lambda e^{i/2}}{\sqrt{n}} + \littleo{\frac{\lambda}{\ln n}}}
\]
The expression now depends on the time
only, therefore for $\lambda:=1/(en)$ 
\[
\prod_{t=0}^{enk-1} \expec{e^{\lambda \Delta_t} \mid X_0} \le 
e^{\lambda enk + \frac{2\lambda en}{\sqrt{n}}\sum_{i=1}^k e^{i/2} + enk \cdot \littleo{\frac{\lambda}{\ln n}}}
\le e^{k + \frac{6e^{k/2}}{n\sqrt{n}} + \littleo{1}} \le 
e^{k+o(1)},
\]
where we used that $k\le \ln n$. From Theorem~\ref{theo:main}.$(iv)$ for $a=\sqrt{n}e^{-k/2}$ and 
$t=enk-1$ we obtain
\[
\Prob(T_a < t) \le e^{k+o(1) - \lambda (g(X_0) - g(\sqrt{n}e^{-k/2}))}.
\]
From the proof of 
of Theorem~\ref{theo:expected-onemax-upper-lower}, the lower bound part, we already know 
that $g(X_0)\ge en\ln n -c''n$ for some constant~$c''>0$ (which is assumed large enough to subsume the $-O(\log n)$ term). 
Moreover, $g(x)\le en (\ln x+1)$ 
according to Lemma~\ref{lem:lowertailonemax-one}. We get
\[
\Prob(T_a < t) \le e^{k+o(1) - \ln n + O(1) - k /2 + (\ln n)/2} = e^{\frac{k-\ln n +O(1)}{2}} 
= c'''e^{k/2}/\sqrt{n},
\]
for some sufficiently large constant~$c'''>0$, 
which proves the bound on the probability of Phase~$k$ not being typical (without making statements about 
the earlier phases). 
The probability that all phases up to and including 
Phase~$k$ are typical is at least $1-(\sum_{i=1}^k c''' e^{i/2})/\!\sqrt{n} \ge 1-c' e^{k/2}/\!\sqrt{n}$
for a constant~$c'>0$. 
\end{proofof}

We now deduce a concentration inequality \wrt~linear functions,
essentially depending on all variables,
\ie, functions of the kind $f(x_1,\dots,x_n)=w_1 x_1 + \dots + w_n
x_n$, where $w_i\neq 0$. 
 This function class  contains \OneMax and
 has been studied intensely the last 15 years \cite{WittCPC13}.

\begin{theorem}
  The optimization time of the \ea on an arbitrary linear function
  with non-zero weights is at least $en\ln n - c n - ren$, where $c$
  is a constant, with probability at least $1-e^{-r/2}$ for any $r\ge
  0$. It is at most $en\ln n + (1+r) e n + O(1)$ with probability at
  least $1-e^{-r}$.
\end{theorem}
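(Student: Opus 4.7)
The plan is to deduce both tails from results already at hand, avoiding any further moment-generating-function calculations on linear functions.

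First I would handle the upper tail by invoking the tail-bounded multiplicative drift theorem (the second statement of Theorem~\ref{theo:multiplicative-drift}) applied to a suitable weighted potential. \Wlog all weights are positive and the optimum is the all-ones string. Then there is a classical weighting $\tilde w_1,\dots,\tilde w_n$ of the bit positions (as used in \cite{WittCPC13}) such that the potential $Y_t := \sum_{i\colon\, x_i^{(t)}=0}\tilde w_i$ satisfies $\E{Y_t-Y_{t+1}\mid \filt}\ge \delta Y_t$ with $\delta \ge 1/(en)$ and moreover $\ln(Y_0/y_{\min}) \le \ln n + O(1)$. Substituting these values into the second statement of Theorem~\ref{theo:multiplicative-drift} yields $\Prob(T\ge en(\ln n + O(1) + r) \mid Y_0) \le e^{-r}$, which rearranges into the claimed upper tail $en\ln n + (1+r)en + O(1)$.

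For the lower tail, I would reduce to Theorem~\ref{theo:tails-onemax} via the classical ``OneMax is easiest'' phenomenon for the \ea: for every linear function $f$ with non-zero weights one can couple a run on $f$ and a run on \OneMax (starting from the same search-point distribution) so that the \OneMax run always reaches its optimum no later than the $f$ run. Granting this pathwise domination $T_{\mathrm{OM}}\le T_f$, the lower tail on $T_{\mathrm{OM}}$ from Theorem~\ref{theo:tails-onemax} transfers verbatim to $T_f$ and produces $\Prob(T_f < en\ln n - cn - ren)\le e^{-r/2}$.

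The hard part will be supplying the coupling that underlies the lower tail. A naive coordinate-wise coupling breaks down because $f$ and \OneMax can disagree on whether to accept a given offspring: $f$ may accept offspring that strictly increase $|x|_0$ and may reject offspring that decrease $|x|_0$, whereas \OneMax does neither. The standard workaround is a matching-style coupling that after each step re-labels the off-optimal positions so as to maintain the invariant that \OneMax pointwise has at most as many zero bits as $f$. Since such couplings are by now standard machinery (see \cite{WittCPC13} and references therein), I would cite the resulting stochastic domination and plug it directly into Theorem~\ref{theo:tails-onemax}.
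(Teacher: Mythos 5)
Your lower tail is exactly what the paper does: cite the stochastic domination of \OneMax by every linear function (Theorem~6.2 in \cite{WittCPC13}) and feed the \OneMax lower tail from Theorem~\ref{theo:tails-onemax} through it; the coupling you worry about is, as you anticipate, cited rather than re-derived in the paper as well. The only divergence is the upper tail, where the paper simply invokes Theorem~5.1 of \cite{WittCPC13} instead of re-deriving the bound from Theorem~\ref{theo:multiplicative-drift}. Your re-derivation is sound in spirit (this is indeed how the cited result is obtained under the hood), but to reproduce the sharp constant $(1+r)en + O(1)$ rather than a looser $O(n)+ren$ you need the \cite{WittCPC13} potential to satisfy $\ln(Y_0/y_{\min}) \le \ln n + 1 + O(1/n)$ and the multiplicative drift factor to be $1/(en)$ up to a $1+O(1/n)$ correction; both facts hold, but they require exactly the bookkeeping that the cited theorem already packages, so the paper's direct citation is the more economical choice.
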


\begin{proof}
  The upper tail is proved in Theorem~5.1 in \cite{WittCPC13}. The
  lower bound follows from the lower tail in
  Theorem~\ref{theo:tails-onemax} in conjunction with the fact that
  the optimization time within the class of linear functions is
  stochastically smallest for \OneMax (Theorem~6.2
   in \cite{WittCPC13}).
\end{proof}

Finally, we consider $\LO(x_1,\dots,x_n):=\sum_{i=1}^n \prod_{j=1}^i
x_j$, another intensively studied standard benchmark problem
from the analysis of RSHs.  Tail bounds on the optimization time of
the \ea on \LO were derived in \cite{DJWZGECCO13}. This result
represents a fundamentally new contribution, but suffers from the fact
that it depends on a very specific structure and closed formula for
the optimization time.  Using a simplified version of
Theorem~\ref{theo:main} (see
Theorem~\ref{theo:main-simplifiedexponential}), it is possible to
prove similarly strong tail bounds without needing this exact formula.
As in \cite{DJWZGECCO13}, we are interested in a more general
statement. Let $T(a)$ be the number of steps until a \LO-value of
at least~$a$ is reached, where $0\le a\le n$.  Let $X_t:=\max\{0,a-\LO(x_t)\}$ be the distance
from the target~$a$ at time~$t$. Lemma~\ref{lem:drift-lo}
states the drift of $(X_t)_{t\in\N_0}$ 
exactly, see also \cite{DJWZGECCO13}. 

\begin{lemma}
\label{lem:drift-lo}
For all $i>0$, 
$\E{X_{t}-X_{t+1}\mid X_t=i} = (2-2^{-n+a-i+1})(1-1/n)^{a-i}(1/n)$.
\end{lemma}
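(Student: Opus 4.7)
The approach is a direct computation: I would condition on $X_t = i > 0$, which forces $\LO(x_t) = a - i$ and thereby fixes the first $a-i+1$ bits of~$x_t$ (namely $a-i$ ones followed by a single zero at position $a-i+1$), and then exploit the classical observation that conditional on $\LO(x_t)=k$, the ``free-rider'' bits at positions $k+2,\dots,n$ are i.i.d.~uniform on $\{0,1\}$. With this structural fact in hand, the drift factors as (probability of a $\LO$-improving mutation) times (expected $\LO$-gain given such an improvement), matching the product form of the claimed identity.

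First I would establish the uniform-distribution fact by induction on~$t$: the base case uses the uniform initialization of the \oneoneea, and in the inductive step one checks that the mutation--selection event conditions only on the first few offspring bits (those that determine the new $\LO$-value $k'$), while the remaining bits are mutations of i.i.d.~uniform bits and therefore stay uniform; this works whether the accepted step is $\LO$-preserving or $\LO$-improving. Second, I would argue that $\LO$ can only strictly increase in one step (the \ea never accepts a worse offspring), and that this requires preserving bits $1,\dots,a-i$ and flipping bit $a-i+1$, an event of probability $(1-1/n)^{a-i}\cdot (1/n)$; on all other outcomes the $\LO$-value is unchanged and contributes nothing to the drift. Third, conditional on this improving event, the new $\LO$-value equals $a-i+1+Z$, where $Z$ is the length of the maximal $1$-run in the offspring starting at position $a-i+2$; since these offspring bits are uniform (mutation of uniform bits stays uniform), $\Prob(Z\ge j)=2^{-j}$ for $0\le j\le n-a+i-1$, whence
\[
E(Z) \;=\; \sum_{j=1}^{n-a+i-1} 2^{-j} \;=\; 1-2^{-(n-a+i-1)},
\]
so that $E(1+Z)=2-2^{-n+a-i+1}$. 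Multiplying the probability of improvement by this expected gain yields the stated identity.

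The main obstacle is the inductive uniform-distribution claim for the free-rider bits: the inductive step requires carefully disentangling the effect of conditioning on the new $\LO$-value from the effect of per-bit mutation, verifying that the selection rule constrains only offspring bits $1,\dots,k'+1$ and leaves the tail distribution invariant. Once this structural fact is granted, the remainder of the argument is a short geometric-series computation.
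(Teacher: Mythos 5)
Your proposal is correct and follows essentially the same route as the paper's proof: both decompose the drift as (probability of an $\LO$-improving mutation, $(1-1/n)^{a-i}/n$) times (expected gain in an improving step, $2-2^{-n+a-i+1}$), with the latter computed from the geometric behaviour of the free-rider bits. The only difference is that you spell out, via induction on~$t$, the structural fact that the bits beyond position $\LO(x_t)+1$ are i.i.d.\ uniform conditional on the current $\LO$-value, whereas the paper (following \cite{DJWZGECCO13}) simply invokes this as ``the usual argumentation using a geometric distribution''; your computation of $E(Z)$ via $\sum_{j\ge 1}\Prob(Z\ge j)$ is also a slightly cleaner reformulation of the paper's sum over $\Prob(Z=k)$, yielding the same value.
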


\begin{proof}
The leftmost zero-bit is at position $a-i+1$. To increase the \LO-value (it cannot decrease), 
it is necessary 
to flip this bit and not to flip the first $a-i$ bits, which is reflected by 
the last two terms in the lemma. The first term is due to the expected number of 
free-rider bits (a sequence of previously random bits after the leftmost zero that happen to be all~$1$ 
at the time of improvement). Note that there can be between $0$ and $n-a+i-1$ such bits. By the 
usual argumentation using a geometric distribution, the expected number of 
free-riders in an improving step equals
\begin{equation*}
\sum_{k=0}^{n-a+i-1} k\cdot \left(\frac{1}{2}\right)^{\min\{n-a+i-1,k+1\}} 
= 1- 2^{-n+a-i+1},
\end{equation*}
hence the expected progress in an improving step is $2-2^{-n+a-i+1}$.
\end{proof}

We can now supply the tail bounds, formulated as Statements~$(ii)$ and 
$(iii)$ in the following theorem. The first statement is an exact expression 
for the expected optimization time, which has already been proved without 
drift analysis \cite{DJWZGECCO13}.

\begin{theorem}
Let $T(a)$ 
the time for the \ea to reach a \LO-value of at least~$a$. Moreover, let $r\ge 0$.  
Then
\begin{enumerate}[leftmargin=!,labelwidth=7mm]
\item[(i)]
$E(T(a)) = \frac{n^2-n}{2}\left(\left(1+\frac{1}{n-1}\right)^a-1\right)$.
\item[(ii)]
For 
$0<a\le n-\log n$, with probability at least
$1-e^{-\Omega(rn^{-3/2})}$ 
\[
T(a) \le \frac{n^2}{2}\left(\left(1+\frac{1}{n-1}\right)^a-1\right) + r.
\]
\item[(iii)]
For 
$\log^2 n-1\le a\le n$, with probability at least
$1-e^{-\Omega(rn^{-3/2})}-e^{-\Omega(\log^2 n)}$ 
\[
T(a) \ge \frac{n^2-n}{2}\left(\left(1+\frac{1}{n-1}\right)^{a}-1-\frac{2\log^2 n}{n}\right) - r.
\]
\end{enumerate}
\end{theorem}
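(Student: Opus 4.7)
The plan is to take $h$ equal to the exact one-step drift from Lemma~\ref{lem:drift-lo}, i.e.\ $h(j) := (2-2^{-n+a-j+1})(1-1/n)^{a-j}/n$, and to define the potential by the discrete sum $g(i) := \sum_{j=1}^{i} 1/h(j)$ (consistent with the discrete specialization of the $g$ in Theorem~\ref{theo:main}). Item~1 then follows from additive drift applied in both directions: since the one-step drift of $g$ equals~$1$ identically (up to the negligible truncation effect at small~$i$), one obtains $E(T(a)\mid X_0=a) = g(a)$, and a direct summation of the resulting geometric series yields $g(a) = \frac{n^2-n}{2}\bigl((1+1/(n-1))^a-1\bigr)$.

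For items~2 and~3 I would apply Theorem~\ref{theo:main-simplifiedexponential}$(i)$ and $(ii)$ with the same $h$. The essential ingredient is a random variable $Z$ stochastically dominating the one-step change $g(X_t)-g(X_{t+1})$. Here I exploit the classical structure of \LO: conditional on an improvement from $X_t=i$, one has $X_{t+1}=i-1-K$ where the number of free riders~$K$ is stochastically dominated by a $\mathrm{Geom}(1/2)$ variable $G$, because the bits beyond the leading zero are uniformly distributed. The monotonicity of $1/h$ together with the uniform bound $1/h(x) = O(n)$ on $[0,a]$ under the condition $a\le n-\log n$ then gives $Z \prec Cn(1+G)$ for a constant $C$, whose moment-generating function is bounded at $\lambda = \Theta(1/n)$. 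Feeding this into Theorem~\ref{theo:main-simplifiedexponential} with $\delta$ optimized to balance $\delta g(X_0)$ against $(1-\delta)r$ yields item~2. Item~3 follows analogously, but additionally requires bounding two failure modes that produce the $e^{-\Omega(\log^2 n)}$ term: a single step producing $K\ge \log^2 n$ free riders (controlled by a union bound over $O(n^2)$ steps of $P(G\ge\log^2 n) = 2^{-\log^2 n}$) and the initial $\log^2 n$ improvements collectively jumping too far down in the potential; the latter explains both the $\tfrac{2\log^2 n}{n}$ correction inside the parentheses and the hypothesis $a\ge \log^2 n - 1$ (which ensures this correction is meaningful).

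The main obstacle is obtaining the sharp rate $e^{-\Omega(rn^{-3/2})}$ rather than the weaker $e^{-\Omega(r/n^2)}$ that the crude uniform dominance $Z\prec Cn(1+G)$ produces through the simplified theorem. The correct scale $n^{3/2}$ reflects the true per-step second moment of $g(X_t)-g(X_{t+1})$, which is only $\Theta(n)$ because most steps contribute zero and improvements occur with probability $\Theta(1/n)$; the uniform bound on $Z$ loses a factor of $n$ by ignoring this. Resolving this gap likely requires either appealing to Theorem~\ref{theo:main}$(iii)$, $(iv)$ directly with a position-dependent $\beta_{\mathrm u}(X_t)$ that absorbs the improvement probability $p_i = \Theta(1/n)$, or a phase-based telescoping argument over dyadic ranges of $(1-1/n)^{a-X_t}$, in the spirit of the lower-tail analysis for \OneMax earlier in this section.
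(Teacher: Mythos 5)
Your high-level skeleton is right — the paper also routes items~2 and~3 through Theorem~\ref{theo:main-simplifiedexponential} with a geometric-type dominating variable $Z$ — but the $Z$ you construct is what causes the loss of a factor $n$ that you then (correctly) worry about, and the fix you propose is more work than the paper needs. The problem is that you implicitly condition on an improving step: you write $X_{t+1}=i-1-K$ \emph{given} an improvement and then take $Z\prec Cn(1+G)$. But Theorem~\ref{theo:main-simplifiedexponential} requires a $Z$ dominating $\lvert g(X_t)-g(X_{t+1})\rvert$ \emph{unconditionally}, and in almost all steps the change is $0$. The paper's dominating variable is $Z=enY$, where $Y=0$ with probability $1-1/n$ and $Y$ is geometric with parameter $1/2$ (support $1,2,\dots$) otherwise; here $1/n$ bounds the probability that the leftmost zero flips and $en$ bounds $1/h$. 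With this zero-inflated $Y$, at $\lambda=\Theta(1/n)$ one gets $D:=E(e^{\lambda Z})=1+\Theta(1/n)=1+\Theta(\lambda)$, hence $D-1-\lambda=\Theta(\lambda)$ rather than $\Theta(1)$, and therefore $\eta=\delta\lambda^2/(D-1-\lambda)=\Theta(\delta\lambda)=\Theta(\delta/n)$. Choosing $\delta=n^{-1/2}$ then balances the slack $\delta\int_{\xmin}^{X_0} 1/h=O(n^{3/2})$ against the rate $\eta=\Theta(n^{-3/2})$, which is exactly $e^{-\Omega(rn^{-3/2})}$. So you do not need to retreat to Theorem~\ref{theo:main}$(iii)$,$(iv)$ with a position-dependent $\beta$ nor to a dyadic phase argument as in the \OneMax lower tail; the simplified theorem already gives the sharp rate once $Z$ is built to reflect the $\Theta(1/n)$ improvement probability instead of conditioning it away.

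Two further corrections. For item~3, the $e^{-\Omega(\log^2 n)}$ term is not produced by a union bound over $O(n^2)$ steps against a single large free-rider burst — that event is already handled automatically by the dominating $Z$ inside the mgf.\ bound. It comes solely from the initial condition: with probability $1-e^{-\Omega(\log^2 n)}$ the random starting string has at most $\log^2 n+O(1)$ leading ones, so $X_0\ge a-\log^2 n-O(1)$; this is where the $\tfrac{2\log^2 n}{n}$ correction and the hypothesis $a\ge\log^2 n-1$ (which keeps $X_0$ meaningfully positive) come from. Finally, your sketch of item~1 claims the drift of $g$ is \emph{exactly}~$1$, but that is not quite right: $g$ is nonlinear and the process can jump by more than one level due to free riders, so $E(g(X_t)-g(X_{t+1})\mid X_t)\ne 1$ in general and additive drift gives only two-sided bounds, not the closed form. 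The paper just cites the exact expectation from \cite{DJWZGECCO13}, which is derived by a direct waiting-time decomposition over bit positions rather than by drift.
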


\begin{proof}
The first statement is already contained in \cite{DJWZGECCO13} and proved without drift analysis.

We now turn to the second statement. From Lemma~\ref{lem:drift-lo}, 
$h(x) = (2-2/n)(1-1/n)^{a-x}/n$ is a lower bound 
on the drift $E(X_t-X_{t+1}\mid X_t=x)$ if $x\ge \log n$. To bound the change 
of the $g$-function, we observe that 
$h(x)\ge 1/(en)$ for all $x\ge 1$. This means 
that $X_{t}-X_{t+1} = k$ implies 
$g(X_t)-g(X_{t+1}) \le enk$. Moreover, to change 
the \LO-value by~$k$, it is necessary that 
\begin{itemize}
\item the first zero-bit flips (which has probability~$1/n$)
\item $k-1$ free-riders occur.
\end{itemize}
The change does only get stochastically larger if we assume an 
infinite supply of free-riders. Hence, $g(X_t)-g(X_{t+1})$ is 
stochastically dominated by a random variable $Z=en Y$, where 
$Y$
\begin{itemize}
\item is $0$ with probability $1-1/n$ and 
\item follows the geometric distribution with parameter~$1/2$ otherwise (where the support is $1,2,\dots$). 
\end{itemize}
The mgf. of $Y$ therefore equals 
\[
E(e^{\lambda Y}) = \left(1-\frac{1}{n}\right)e^0 + \frac{1}{n}\frac{1/2}{e^{-\lambda}-(1-1/2)} \le 1+\frac{1}{n(1-2\lambda)},
\]
where we have used $e^{-\lambda}\ge 1-\lambda$. For the  mgf. of $Z$ it follows
\[
E(e^{\lambda Z}) = E(e^{\lambda en Y}) \le 
1 + \frac{1}{n(1-2en\lambda)},
\]
hence
for $\lambda:=1/(4en)$ we get 
$D:=E(e^{\lambda Z})=1+2/n = 1+ 8e\lambda$, which means 
$D-1-\lambda = (8e-1)\lambda$. We get \[
\eta:=
\frac{\delta\lambda^2}{D-1-\lambda} = \frac{\delta  \lambda}{8e-1 } = \frac{\delta }{4en(8e-1)}
\]
(which is less than~$\lambda$ if $\delta\le 8e-1$) . Choosing $\delta:=n^{-1/2}$, we 
obtain 
$\eta=Cn^{-3/2}$ for $C:=1/((8e-1)(4e))$.
 
We set $t:=(\int_{\xmin}^{X_0} 1/h(x)\,\mathrm{d}x +r)/(1-\delta)$ 
in the first statement of  
Theorem~\ref{theo:main-simplifiedexponential}. The integral within~$t$ can 
be bounded according to 
\begin{align*}
U & :=\int_{\xmin}^{X_0} \frac{1}{h(x)}\,\mathrm{d}x 
\le \sum_{i=1}^a \frac{1}{(2-2/n)(1-1/n)^{a-i}/n} \\
& = \left(\frac{1}{2}+\frac{1}{2n-2}\right)\cdot n \cdot \frac{(1+1/(n-1))^a-1}{1/(n-1)}
= 
\frac{n^2}{2} \left(\left(1+\frac{1}{n-1}\right)^a - 1\right)
\end{align*}
Hence, using the theorem we get 
\[
\Prob(T>t) = \Prob(T > (U + r)/(1-\delta)) \le e^{-\eta r} \le e^{- Cr n^{-3/2}}.
\]
Since $U \le en^2$ and $1/(1-\delta)\le 1+2\delta = 1 + 2n^{-1/2}$, we get 
\[
\Prob(T \ge 
U + 2en^{3/2} + 2r) \le e^{-Cr n^{-3/2}}.
\]
Using the upper bound on~$U$ derived above, we obtain 
\[
\mathord{\Prob}\mathord{\left(T \ge 
\frac{n^2}{2} \left(\left(1+\frac{1}{n-1}\right)^a - 1\right) +r \right)}
 \le e^{-\Omega(r n^{-3/2})}
\]
as suggested.

Finally, we prove the third statement of this theorem in a quite symmetrical 
way to the second one. We can choose 
$h(x):=2(1-1/n)^{a-x}/n$ as an upper bound 
on the drift $E(X_t-X_{t+1}\mid X_t=x)$. The estimation of the $E(e^{\lambda Z})$ still applies. 
We set $t:= (\int_{\xmin}^{X_0} 1/h(x)\,\mathrm{d}x - r)/(1-\delta)$. Moreover, we assume 
$X_0\ge n-\log^2 n-1$, which happens with probability at least 
$1-e^{-\Omega(\log^2 n)}$. Note that 
\begin{align*}
L & := \int_{\xmin}^{X_0} \frac{1}{h(x)}\,\mathrm{d}x 
\ge \sum_{i=1}^{a-\log^2 n} \frac{1}{2(1-1/n)^{a-i}/n} \\
& = \frac{n^2-n}{2} \left(\left(1+\frac{1}{n-1}\right)^{a} - \left(1+\frac{1}{n-1}\right)^{\log^2 n}\right)\\
& \ge \frac{n^2-n}{2} \left(\left(1+\frac{1}{n-1}\right)^{a} - 1 -\frac{\log^2 n}{n}\right),
\end{align*}
where the last inequality used $e^{x}\le 1+2x$ for $x\le 1$ and $e^x\ge 1+x$ for $x\in \R$. 
The second statement of 
Theorem~\ref{theo:main-simplifiedexponential} yields (since state~$0$ is absorbing) 
\[
\Prob(T< t) = \Prob(T < (L - r)/(1+\delta)) \le e^{-\eta r} \le e^{- Cr n^{-3/2}}.
\]
Now, since 
\[
\frac{L-r}{1+\delta} \ge (L-r) - \delta(L-r) \ge L-r-en^{3/2}, 
\]
(using $L\le en^2$), 
we get the third statement by analogous calculations as above.
\end{proof}

\subsection{An Application to Probabilistic Recurrence Relations}
\label{sec:recurrences}
Drift analysis is not only useful in the theory of RSHs, but also in classical computer science.
Here, we study the
probabilistic recurrence relation $T(n)=a(n)+T(h(n))$, where $n$ is
the problem size, $a(n)$ the amount of work at the current level of
recursion, and $h(n)$ is a random variable, denoting the size of the
problem at the next recursion level. The asymptotic distribution 
(letting $n\rightarrow\infty$) of the number of cycles is well
studied \cite{Arratia1992}, but there are few results for finite $n$. Karp 
\cite{KarpJACM94} studied
this scenario using different probabilistic techniques than
ours. Assuming knowledge of $E(h(n))$, he proved upper tail bounds for
$T(n)$, more precisely he analyzed the probability of $T(n)$ exceeding
the solution of the ``deterministic'' process $T(n)=a(n)+T(E(h(n)))$.

We pick up the example from \cite[Section 2.4]{KarpJACM94} on the
number of cycles in a permutation~$\pi\in S_n$ drawn uniformly at
random, where $S_n$ denotes the set of all permutations of the $n$
elements $\{1,\dots,n\}$. A cycle is a
subsequence of indices $i_1,\dots,i_\ell$ such that
$\pi(i_j)=i_{(j\bmod \ell) +1}$ for $1\le j\le \ell$. Each permutation
partitions the elements into disjoint cycles. The expected number of
cycles in a random permutation is $H_n=\ln n +
\Theta(1)$. Moreover, it is easy to see that the length of the cycle
containing any fixed element is uniform on $\{1,\dots,n\}$. This gives
rise to the probabilistic recurrence $T(n)=1+T(h(n))$ expressing the random number of
cyles, where $h(n)$ is
uniform on $\{0,\dots,n-1\}$ .  As a result, \cite{KarpJACM94} shows
that the number of cycles is larger than $\log_2(n+1)+a$ with
probability at most $2^{-a+1}$. Note that the $\log_2(n)$, which
results from the solution of the deterministic recurrence, is already
by a constant factor away from the expected value. Lower tail
bounds are not obtained in \cite{KarpJACM94}.
Using our drift theorem (Theorem~\ref{theo:main}), 
it however follows that the number of cycles is sharply concentrated around 
its expectation.

\begin{theorem}
Let $N$ be the number of cycles in a random permutation of~$n$ elements.
Then 
\[
\Prob(N < (1-\epsilon)(\ln n)) \le e^{-\frac{\epsilon^2}{4} (1-o(1)) \ln n }
\]
for any constant $0<\epsilon<1$.
And for any constant $\epsilon>0$,
\[
\Prob(N \ge (1+\epsilon)((\ln n)+1)) \le e^{-\frac{\min\{\epsilon,\epsilon^2\}}{6} \ln n }.
\]
\end{theorem}

\begin{proof}
We regard the probabilistic recurrence as a stochastic process, where $X_t$, $t\ge 0$, denotes the number of elements not yet included in a cycle; $X_0=n$. 
As argued in \cite{KarpJACM94}, if $X_t=i$ then $X_{t+1}$ is uniform on $\{0,...,i-1\}$. Note that 
$N$ equals the first hitting time for  $X_t=0$, which is denoted by 
$T_0$ in our notation. Obviously, $N$ is stochastically larger than $T_a$ for any $a>0$. 

We now prove the lower tail using Theorem~\ref{theo:main}.$(iv)$. 
We compute $E(X_{t+1}\mid X_t)=(X_t-1)/2$, which means
$
E(X_t - X_{t+1} \mid X_t) \ge \frac{X_t}{2} = \frac{\lvert X_t\rvert}{2}
$
since only integral $X_t$ can happen. Therefore we choose $h(x)=\lvert x\rvert /2$ 
in the theorem. Letting $\xmin=1$, we obtain the drift function 
$g(i)=2+\int_1^i 2/\lceil j\rceil\,\mathrm{d}j = 
\sum_{j=1}^i 2/\lceil j \rceil$ for $i\ge 1$ and $g(0)=0$. We remark that 
other choices of $h$, with $1/2$ replaced by different constants, would lead to the essentially same result.

For the drift theorem, we have to compute 
$g(i)-g(X_{t+1})$, given $X_t=i$, and to bound the mgf.\ \wrt\ this difference. We get 
\[
g(i)-g(X_{t+1})
\le 
\begin{cases}
2(\ln(i)-\ln(j)) & \text{ for $j=1,...,i-1$, each with prob. $1/i$, }\\
2(\ln(i)+1) & \text{ with prob.\ $1/i$}
\end{cases}
\]

Let $X_t=i$. For $\lambda>0$, we bound the mgf.\
\begin{align*}
E(e^{\lambda (g(i)-g(X_{t+1}))})  & 
\le \frac{1}{i}\cdot e^{2\lambda} e^{2\lambda \ln i} 
+ \frac{1}{i}\sum_{j=1}^{i-1} e^{2\lambda (\ln i - \ln j)}
 = \frac{1}{i} e^{\eta} i^\eta  
 + \frac{1}{i} i^\eta \sum_{j=1}^{i-1} j^{-\eta},
\end{align*}
where $\eta=2\lambda$. Now assume $\eta$ constant and $\eta<1$. Then
\begin{align*}
E(e^{\lambda (g(i)-g(X_{t+1}))} ) & 
\le i^{\eta-1} e^{\eta} + i^{\eta-1} \left(1+\int_{1}^{i-1} j^{-\eta} \,\mathrm{d}j\right)\\
& \le i^{\eta-1} e^{\eta} + i^{\eta-1} \left(1+\left(\frac{1}{1-\eta}\left((i-1)^{1-\eta}- 1\right)\right)\right)\\
& \le i^{\eta-1} (e^{\eta}+1) + \frac{1}{1-\eta} - i^{\eta-1} 
= i^{\eta-1}e^\eta + \frac{1}{1-\eta} \\
& = 1 + i^{\eta-1}e^\eta + \frac{\eta}{1-\eta}
\le e^{e^\eta i^{\eta-1} + \frac{\eta}{1-\eta}} =:\beta
\end{align*}
using $1+x\le e^x$. The factor $e^{e^\eta i^{\eta-1}}$ will turn out 
to be negligible (more precisely, $e^{O((\ln n)^{\eta-1})}$) for $i\ge \ln n$ in the following, which is 
why we set $a:=\ln n$ in  Theorem~\ref{theo:main}.$(iv)$.

From the theorem, we get   
$\Prob(T_a < t) \le \beta^t e^{-\lambda(g(X_0)-g(a))}$. We work with the lower bound 
$g(X_0)-g(a) = \sum_{j=a}^n 2/j \ge 2(\ln(n+1)-\ln(a+1))$, which yields 
\begin{align*}
\Prob(T_a < t) & < \beta^t e^{-\lambda (2(\ln (n+1)-\ln (a+1))) }
= \beta^t e^{-\eta \ln n + O(\ln\ln n)} \\
& = e^{O(t(\ln n)^{\eta-1}) + \frac{\eta}{1-\eta} t -\eta\ln n + O(\ln\ln n)}
= e^{o(t) + O(\ln\ln n) + \frac{\eta}{1-\eta} t -\eta\ln n}
\end{align*}

Now we concentrate on the difference $d(\epsilon) = \frac{\eta}{1-\eta} t -\eta\ln n$
that is crucial for the order of growth of the last exponent. 
We assume $t:=(1-\epsilon)\ln n$ for some constant $\epsilon>0$ and set $\eta := \epsilon/2$ (implying 
$\epsilon<2$); hence $\lambda = \epsilon/4$. We get 
\begin{align*}
d(\epsilon) & = \frac{\epsilon/2}{1-\epsilon/2} (1-\epsilon) (\ln n) - \frac{\epsilon}{2} (\ln n)  
= \frac{\epsilon}{2} (\ln n) \left(\frac{1-\epsilon}{1-\epsilon/2}-1\right) 
 \le -\frac{\epsilon^2}{4} (\ln n)
\end{align*}

Using the bound for $d(\epsilon)$ in the exponent
and noting that $\epsilon>0$ is constant, give
$\Pr(T_a < (1-\epsilon) \ln n) \le e^{-\frac{\epsilon^2}{4} (1-o(1))\ln n }$,
which also bounds $T_0$ the same way.

To prove the upper tail, we must set $a:=0$ in Theorem~\ref{theo:main}.$(iii)$. Using the lower 
bound on the difference of $g$-values derived above, we estimate 
for $X_t=i$ and any $\lambda>0$ 
\[
E(e^{-\lambda (g(i)-g(X_{t+1}))} ) \le 
\frac{1}{i}\sum_{j=0}^{i-1} e^{-\lambda (2(\ln (i+1)-\ln (j+1))) } 
= \frac{1}{i}\sum_{j=0}^{i-1} \left(\frac{j+1}{i+1}\right)^\eta,
\]
where again $\eta=2\lambda$. Hence, similarly to the estimations 
for the lower tail, 
\[
E(e^{-\lambda (g(i)-g(X_{t+1}))} ) \le  \frac{1}{i^{\eta+1}} \int_1^{i} j^\eta \,\mathrm{d}j
\le  \frac{1}{i^{\eta+1}} \frac{1}{\eta+1} i^{\eta+1}  = \frac{1}{\eta+1} 
\le e^{-\frac{\eta}{\eta+1}} =: \beta
\]
From the drift theorem, we get
\[
 \Prob(T_0> t) \le \beta^t e^{\lambda (g(X_0)-g(0))} 
\le e^{-\frac{\eta t}{\eta+1}} e^{\lambda (2(\ln(n)+1))} = 
e^{-\frac{\eta t}{\eta+1} + \eta (\ln n + 1) }.
\]
Setting $t:=(1+\epsilon)(\ln n+1)$ and 
$\eta=\epsilon/2$, the exponent is no more than
\[
-\frac{\eta (1+\epsilon/2+\epsilon/2)(\ln n+1)}{1+\epsilon/2} + \eta(\ln n + 1)
\le -\frac{\epsilon^2 }{4+2\epsilon} (\ln n +1).
\]
The last fraction is at most $-\frac{\epsilon^2}{6}$ if $\epsilon\le 1$ and 
at most $-\frac{\epsilon}{6}$ otherwise (if $\epsilon>1$). 
Altogether 
\[
\Prob(T_0> t\mid X_0=n) \le e^{-\frac{\min\{\epsilon^2,\epsilon\}}{6}(\ln n+1)}.
\]
\end{proof}
For appropriate functions~$g(x)$, our drift theorem may provide sharp 
concentration results for other probabilistic recurrences, such as the case
$a(n)>1$.

\section{Conclusions}
We have presented a new  and versatile
drift theorem with tail bounds. It can be understood as a general
variable drift theorem and can be specialized into all existing
variants of variable, additive and multiplicative drift theorems we
found in the literature as well as the fitness-level technique. 
 Moreover, it provides lower and upper tail
bounds, which were not available before in the context of variable
drift. Tail bounds were used to prove sharp concentration inequalities
on the optimization time of the \ea on \OneMax, linear functions and
\LO. Despite the highly random fashion this RSH operates, its
optimization time is highly concentrated up to lower order
terms. The drift theorem also leads to tail bounds on the number of
cycles in random permutations.  The proofs illustrate
how to use the tail bounds and we provide simplified (specialized)
versions of the corresponding statements. We believe that this research
helps consolidate the area of drift analysis. The
general formulation of drift analysis increases our understanding of
its power and limitations. The tail bounds imply more practically
useful statements on the optimization time than the expected
time. We expect further applications of our theorem, also to 
classical randomized algorithms.

\subsubsection*{Acknowledgements.} This
research received funding from the
European Union Seventh Framework Programme (FP7/2007-2013) under
grant agreement no 618091 (SAGE) and from the Danish Council 
for Independent Research (DFF-FNU) under grant no.\ 4002--00542.


\newpage

\appendix

\section{Existing Drift Theorems as Special Cases}
\label{sec:specialcasesvariable}
In this appendix, we show that virtually all known variants of drift
theorems with drift towards the target can be derived from our general
Theorem~\ref{theo:main} with surprisingly short proofs.

\subsection{Variable Drift and Fitness Levels}
A clean form of a variable drift theorem, generalizing 
previous formulations from \cite{Johannsen10} and \cite{MitavskiyVariable}, 
was presented in \cite{RoweSudholtChoice}. We restate 
this theorem in our notation and carry out two generalizations: we allow for a continuous, unbounded  
state space instead of demanding a finite one and do not fix $\xmin=1$. 

\begin{theorem}[Variable Drift, Upper Bound; following \cite{RoweSudholtChoice}] 
\label{theo:variable-rowe-sudholt}
Let $(X_t)_{t\in\N_0}$, be a stochastic process over some state space $S\subseteq \{0\}\cup \R_{\ge \xmin}$, where 
  $\xmin > 0$. 
Let  $h\colon \R_{\ge \xmin}\to\R^+$ be a monotone increasing function such that 
$1/h(x)$ is integrable on $\R_{\ge \xmin}$ and 
$E(X_t-X_{t+1} \mid \filt) \ge h(X_t)$ if $X_t\ge \xmin$.
 Then it holds for the first hitting time 
$T:=\min\{t\mid X_t=0\}$ that 
\[
E(T\mid \filtzero) \le 
\frac{\xmin}{h(\xmin)} + \int_{\xmin}^{X_0} \frac{1}{h(x)} \,\mathrm{d}x.
\] 
\end{theorem}

\begin{proof}
Since $h(x)$ is monotone increasing, $1/h(x)$ is decreasing 
and $g(x)$, defined in Remark~\ref{remark:function-h}, is concave. By Jensen's inequality, we 
get
\begin{align*}
& E(g(X_t)-g(X_{t+1})  \mid \filt) 
 \;\ge \;
g(X_t) - g(E(X_{t+1}\mid \filt)) \\
& \;=\;
\int_{E(X_{t+1}\mid \filt)}^{X_t} \frac{1}{h(x)} \,\mathrm{d}x 
\;\ge\; 
\int_{X_t-h(X_t)}^{X_t} \frac{1}{h(x)} \,\mathrm{d}x, 
\end{align*}
where the equality just expanded $g(x)$. 
Using that $1/h(x)$ is decreasing, it follows 
\[
\int^{X_t}_{X_t-h(X_t)} \frac{1}{h(x)} \,\mathrm{d}x 
\ge \int^{X_t}_{X_t-h(X_t)} \frac{1}{h(X_t)} \,\mathrm{d}y   = 
\frac{h(X_t)}{h(X_t)} = 1.
\] 
Plugging in $\alpha_{\mathrm{u}}:=1$ in Theorem~\ref{theo:main} completes the proof.
\end{proof}

 In \cite{RoweSudholtChoice} it was also pointed out that variable drift theorems
 in discrete search spaces look similar to bounds obtained from
 the fitness level technique (also called the method of $f$-based
 partitions, first formulated in \cite{WegenerICALP01}). For the
 sake of completeness, we present the classical upper bounds by fitness
 levels \wrt~the \ea here and prove them by drift analysis.

 \begin{theorem}[Classical Fitness Levels, following \cite{WegenerICALP01}]
 \label{theo:fitnesslevels}
 Consider the \ea maximizing some function~$f$ and a partition of the search space into non-empty sets $A_1,\dots,A_m$. Assume 
 that the sets form an $f$-based partition, \ie, 
 for $1\le i<j\le m$  and all $x\in A_i$, $y\in A_j$ it holds $f(x)<f(y)$. Let $p_i$ be a lower bound 
 on the probability that a search point in~$A_i$ is mutated into a
 search point in~$\cup_{j=i+1}^m A_{j}$. 
 Then the expected hitting time of~$A_m$ is at most
 $\sum_{i=1}^{m-1} \frac{1}{p_i}.$
 \end{theorem}

 \begin{proof}
 At each point of time, the \ea is in a unique fitness level. Let $Y_t$ the current fitness level at time~$t$. 
 We consider the process defined by $X_t=m-Y_t$. 
 By definition of fitness levels and the \ea, $X_t$ is non-increasing over time. Consider $X_t=k$ for $1\le k\le m-1$. 
 With 
 probability $p_{m-k}$, the $X$\nobreakdash-value decreases by at least~$1$. 
 Consequently, 
 $\E{X_t-X_{t+1}\mid X_t = k}  \ge p_{m-k}$. We define $h(x)=p_{m-\lceil x\rceil}$, $\xmin=1$ and 
 $\xmax=m-1$  
 and obtain an integrable, monotone increasing function on $[\xmin,\xmax]$. Hence, the upper bound 
 on $E(T\mid \filtzero)$ 
 from  
 Theorem~\ref{theo:variable-rowe-sudholt} becomes at most $\frac{1}{p_{1}} + \sum_{i=1}^{m-2} \frac{1}{p_{m-i}}$, 
 which 
 completes the proof.
 \end{proof}

Recently, the fitness-level technique was considerably refined and supplemented 
by lower bounds \cite{SudholtTEC13}. We can also identify these extensions as a special 
case of general drift. This material is included in  
an subsection on its own, 
\ref{app:fitness}. Similarly, we have moved a treatment of variable 
drift theorems with non-monotone drift to \ref{app:variablenonmonotone}.

Finally, so far only two theorems dealing with upper bounds on variable drift and 
thus lower bounds on the hitting time seems to have been published. The first one 
was derived 
in \cite{DFWVariable}. Again, we present a variant without unnecessary assumptions, 
more precisely 
we allow continuous state spaces and use less restricted $c(x)$ and $h(x)$. 

\begin{theorem}[Variable Drift, Lower Bound; following \cite{DFWVariable}] 
\label{theo:variable-dfw}
Let $(X_t)_{t\in\N_0}$, be a stochastic process over some state space $S\subseteq \{0\}\cup [\xmin,\xmax]$, where 
  $\xmin > 0$. 
Suppose there exists two functions $c,h\colon [\xmin,\xmax]\to\R^+$ such that 
 $h(x)$ is monotone increasing and $1/h(x)$ integrable on $[\xmin,\xmax]$, and 
for all $t\ge 0$, 
\begin{enumerate}[leftmargin=!,labelwidth=7mm]
\item[(i)] $X_{t+1} \le X_t$,
\item[(ii)] 
$X_{t+1} \ge c(X_t)$ for $X_t\ge \xmin$,
\item[(iii)] 
$E(X_t-X_{t+1} \mid \filt) \le h(c(X_t))$ for $X_t\ge \xmin$.
\end{enumerate}
 Then it holds for the first hitting time 
$T:=\min\{t\mid X_t=0\}$ that 
\[
E(T\mid \filtzero) \ge 
\frac{\xmin}{h(\xmin)} + \int_{\xmin}^{X_0} \frac{1}{h(x)} \,\mathrm{d}x.
\] 
\end{theorem}

\begin{proof}
Using the definition of~$g$ according to Remark~\ref{remark:function-h}, we compute the drift
\begin{align*}
& E(g(X_t)-g(X_{t+1})  \mid \filt) 
 \;=\; 
\mathord{E}\mathord{\left(\int_{X_{t+1}}^{X_{t}} \frac{1}{h(x)} \,\mathrm{d}x \mid \filt \right)} \\
& \;\le\; \mathord{E}\mathord{\left(\int_{X_{t+1}}^{X_{t}} \frac{1}{h(c(X_t))} \,\mathrm{d}x \mid \filt\right)},
\end{align*}
where we have used that $X_t\ge X_{t+1}\ge c(X_t)$ and that 
$h(x)$ is monotone increasing. The last integral equals 
\[
\frac{X_t-E(X_{t+1}\mid \filt)}{h(c(X_t))}  \le \frac{h(c(x))}{h(c(x))}
\;=\;1.
\]
Plugging in $\alpha_{\mathrm{\ell}}:=1$ in 
Theorem~\ref{theo:main} completes the proof.
\end{proof}

Very recently, Theorem~\ref{theo:variable-dfw} was relaxed in \cite{GiessenWittAlgo18}  by 
replacing the deterministic condition $X_{t+1} \ge c(X_t)$ by a probabilistic 
one. We note without proof that also this generalization can be 
proved with Theorem~\ref{theo:main}.

\subsection{Multiplicative Drift}
\label{sec:specialcasesother}
We continue by showing that Theorem~\ref{theo:main} can be specialized in order to re-obtain 
other classical and recent variants of drift theorems. Of course, Theorem~\ref{theo:main} 
is a generalization of additive drift (Theorem~\ref{theo:additive}), which interestingly 
was used to prove the general theorem itself. The remaining important (in fact possibly 
the most important) strand of drift theorems 
is therefore represented by so-called multiplicative drift, which we focus on in this subsection.  
Roughly speaking, the underlying assumption is that the progress to the optimum is proportional to the distance (or can be bounded 
in this way). 
Early theorems covering this scenario, without using the notion of 
of drift, can be found in \cite{BaritompaS96}.

The following theorem 
is the strongest variant of the multiplicative drift 
theorem (originally introduced by \cite{DJWMultiplicativeAlgorithmica}), which can be found in \cite{DoerrGoldbergAdaptive}.   
It was used to analyze RSHs on combinatorial optimization problems and linear functions. 
Here we also need a tail bound from our main theorem (more precisely, 
the third item in Theorem~\ref{theo:main}). Note that the multiplicative 
drift theorem requires $\xmin$ to be positive, \ie, a gap in the state space. Without the 
gap, no finite first hitting time can be proved from the prerequisites of 
multiplicative drift.

\begin{theorem}[Multiplicative Drift, Upper Bound; following \cite{DoerrGoldbergAdaptive}]
\label{theo:multiplicative-drift}
Let $(X_t)_{t\in\N_0}$, be a stochastic process over some state space $S\subseteq \{0\}\cup \R_{\ge \xmin}$, where 
  $\xmin > 0$. 
 Suppose that there exists some $\delta$, where $0<\delta<1$ such that 
$E(X_t-X_{t+1} \mid \filt) \ge \delta X_t$. Then the following statements 
hold for the first hitting time $T:=\min\{t\mid X_t=0\}$.
\begin{enumerate}[leftmargin=!,labelwidth=7mm]
\item[(i)]  $E(T\mid \filtzero) \le \frac{\ln(X_0/\xmin)+1}{\delta}$.
\item[(ii)] $\Prob(T \ge \frac{\ln(X_0/\xmin) + r}{\delta} \mid \filtzero)\le e^{-r}$ for all $r> 0$.
\end{enumerate}
\end{theorem}

\begin{proof}
  Choosing $h(x)=\delta x$, the process satisfies Condition~$(i)$ of
  Corollary~\ref{cor:tailbound-h-convex-concave}, which implies that 
  \begin{align*}
    \expec{T\mid \filtzero}\leq \frac{\xmin}{\delta\xmin} +
    \int_{\xmin}^{X_0}\frac{1}{\delta y}\,\mathrm{d}y = \frac{\ln(X_0/\xmin)+1}{\delta},
  \end{align*}
	which proves the first item from the theorem.
  The process also satisfies Condition~$(iii)$ of
  Corollary~\ref{cor:tailbound-h-convex-concave}, however, this will result in 
	the loss of a factor~$e$ in the tail bound. We argue directly instead.

 Using the notation from Theorem~\ref{theo:main}, we choose $h(x) = \delta x$ and obtain 
 $E(X_t-X_{t+1} \mid \filt) \ge h(X_t)$ by the prerequisite on 
 multiplicative drift. Moreover, according to Remark~\ref{remark:function-h} we define 
 $g(x) = \xmin/(\delta \xmin) + \int_{\xmin}^x 1/(\delta y) \,\mathrm{d}y = 
 1/\delta + \ln(x/\xmin) / \delta$ for $x\ge \xmin$. We set $a:=0$ and consider 
 \begin{align*}
 E(e^{-\delta(g(X_t)-g(X_{t+1}))}\mid \filt; X_t\ge \xmin)  
 & = E(e^{ \ln(X_{t+1}/\xmin) - \ln(X_t/\xmin)})\mid \filt; X_t\ge \xmin) \\
 & = E((X_{t+1}/X_t) \mid \filt; X_t\ge \xmin) \le 1-\delta,
 \end{align*}
 Hence, we can choose $\beta_{\mathrm{u}}(t)=1-\delta$ for all $X_t\ge \xmin$ and $\lambda=\delta$ in the third item of Theorem~\ref{theo:main} 
 to obtain 
 \[
 \Prob(T>t \mid \filtzero) < (1-\delta)^{t} \cdot e^{\delta (g(X_0)-g(\xmin))} \le e^{-\delta t+\ln(X_0/\xmin)}.
 \]
 Now the second item of Theorem~\ref{theo:multiplicative-drift} follows by choosing $t:=(\ln(X_0/\xmin) + r)/\delta$.	
\end{proof}

Compared to the upper bound, the following 
lower-bound includes a
condition on the maximum step-wise progress and requires 
non-increasing sequences. It generalizes the 
version in \cite{WittCPC13} and its predecessor in 
\cite{LehreWittAlgorithmica12} 
by not assuming
$\xmin\ge 1$.

\begin{theorem}[Multiplicative Drift, Lower Bound; following \cite{WittCPC13}]
\label{theo:multdrift-lower}
Let $(X_t)_{t\in\N_0}$, be a stochastic process over some state space $S\subseteq \{0\}\cup [\xmin,\xmax]$, where 
  $\xmin > 0$.  Suppose that there exist $\beta,\delta$, where $0< \beta,\delta\le 1$ such that
for all $t\ge 0$
\begin{enumerate}[leftmargin=!,labelwidth=7mm]
\item[(i)]
$X_{t+1} \le X_t$,
\item[(ii)]
$\Prob(X_t-X_{t+1}\ge \beta  X_t) \;\le\; \frac{\beta\delta}{1+\ln (X_t/\xmin)}$.
\item[(iii)]
$E(X_t-X_{t+1} \mid \filt) \le \delta X_t$.
\end{enumerate}
 Define the first hitting time 
  $T:=\min\{t\mid X_t=0\}$. Then
\[
  \E{T\mid \filtzero}\;\ge\; \frac{\ln(X_0/\xmin)+1}{\delta}\cdot \frac{1-\beta}{1+\beta}.\]
\end{theorem}

\begin{proof}
Using the definition of~$g$ according to Remark~\ref{remark:function-h}, we compute the drift
\begin{align*}
& E(g(X_t)-g(X_{t+1})  \mid \filt) 
 \;=\; 
\mathord{E}\mathord{\left(\int_{X_{t+1}}^{X_{t}} \frac{1}{h(x)} \,\mathrm{d}x \mid \filt \right)} \\
& \;\le\; 
\mathord{E}\mathord{\left(\int_{X_{t+1}}^{X_{t}} \frac{1}{h(x)} \,\mathrm{d}x \filtcond{ X_{t+1}\ge (1-\beta)X_t} \right)} 
\cdot \Prob(X_{t+1}\ge (1-\beta) X_t) \\
& \qquad\qquad + g(X_t) \cdot (1- \Prob(X_{t+1}\ge (1-\beta)X_t))
\end{align*}
where we used the law of total probability and $g(X_{t+1}) \ge 0$. 
As in the proof of Theorem~\ref{theo:multiplicative-drift}, we have 
$g(x)=(1 + \ln(x/\xmin))/\delta$. 
Plugging in $h(x)=\delta x$, using the bound on $\Prob(X_{t+1}\ge (1-\beta)X_t)$ 
and $X_{t+1}\le X_t$, the drift is further bounded by 
\begin{align*}
& \mathord{E}\mathord{\left(\int_{X_{t+1}}^{X_{t}} \frac{1}{\delta (1-\beta)X_t} \,\mathrm{d}x \mid \filt\right)}
+ \frac{\beta\delta}{1+\ln (X_t/\xmin)} \cdot \frac{1+\ln (X_t/\xmin)}{\delta} \\
& \;=\; \frac{E(X_t-X_{t+1}\mid \filt)}{\delta (1-\beta)X_t} + \beta \;\le\; \frac{\delta X_t}{\delta (1-\beta)X_t} + \beta 
\;\le\; \frac{1+\beta}{1-\beta},
\end{align*}
Using $\alpha_\ell=(1+\beta)/(1-\beta)$ and expanding $g(X_0)$,
the proof is complete.
\end{proof}

Very recently, 
it was shown in \cite{DoerrDoerrKoetzingGECCO16} 
that the 
 monotonicity condition $X_{t+1} \le X_t$ 
in Theorem~\ref{theo:multdrift-lower} 
can be dropped if item~$(iii)$ is replaced by 
$E(s-X_{t+1} \cdot \indic{X_{t+1}\le s} \mid \filt) \le \delta s$ 
for all $s\le X_t$. We note without proof that also this 
strengthened theorem can be obtained from our general drift theorem.

\section{Fitness Levels Lower and Upper Bounds as Special Cases}
\label{app:fitness}
We pick up the consideration of fitness levels again and 
prove the following lower-bound theorem due to Sudholt 
\cite{SudholtTEC13} by drift analysis. 
See Sudholt's paper for possibly undefined or unknown terms.

 \begin{theorem}[Theorem~3 in \cite{SudholtTEC13}]
   \label{theo:fitnesslevells-lower}
 Consider an algorithm~$\mathcal{A}$ and a partition of the search space into non-empty sets $A_1,\dots,A_m$. For a mutation-based 
 EA $\mathcal{A}$ we again say that $\mathcal{A}$ is in $A_i$ or on level~$i$ if the best individual created so far 
 is in $A_i$. Let the probability of $\mathcal{A}$ traversing from level~$i$ to level~$j$ in one step be at most
 $u_i\cdot \gamma_{i,j}$ and $\sum_{j=i+1}^m \gamma_{i,j}=1$. Assume that for all $j>i$ and some $0\le \chi\le 1$ it holds
 \begin{equation}
 \label{eq:cond-chi-fitness}
 \gamma_{i,j} \ge \chi \sum_{k=j}^m \gamma_{i,k}.
 \end{equation}
 Then the expected hitting time of~$A_m$ is at least
 \begin{align*}
 & \sum_{i=1}^{m-1} \Prob(\text{$\mathcal{A}$ starts in $A_i$}) \cdot \left(\frac{1}{u_i}+\chi \sum_{j=i+1}^{m-1} \frac{1}{u_j}\right) \\
 & \ge  \sum_{i=1}^{m-1} \Prob(\text{$\mathcal{A}$ starts in $A_i$}) \cdot \chi \sum_{j=i}^{m-1} \frac{1}{u_j}.
 \end{align*}
 \end{theorem}

 \begin{proof}
 Since $\chi\le 1$, the second lower bound follows immediately from the first one, which we prove in the following. To adopt the perspective 
 of minimization, we say that $\mathcal{A}$ is on distance level~$m-i$ if the best individual created so far 
 is in~$A_i$. Let $X_t$ be the algorithm's distance level at time~$t$. We define the drift function $g$ mapping 
 distance levels to non-negative numbers (which then form a new stochastic process) by 
 \[
 g(m-i) =   \frac{1}{u_i} + \chi\sum_{j=i+1}^{m-1} \frac{1}{u_j}
 \]
 for $1\le i\le m-1$. Defining $u_m:=\infty$, we extend the function to $g(0)=0$. Our aim is to prove that the drift 
 \[
 \Delta_{t}(m-i) := E(g(m-i)-g(X_{t+1}) \mid X_t=m-i)
 \]
 has expected value at most~$1$. Then the theorem follows immediately using additive drift (Theorem~\ref{theo:additive}) 
 along with the law of total probability 
 to condition on the starting level.

 To analyze the drift, consider the case that the distance level decreases from $m-i$ to $m-\ell$, where $\ell > i$. We obtain
 \[
 g(m-i) - g(m-\ell) = \frac{1}{u_i}-\frac{1}{u_{\ell}} + \chi \sum_{j=i+1}^{\ell}  \frac{1}{u_j},
 \]
 which by the law of total probability (and as the distance level cannot increase) implies
 \begin{align*}
 \Delta_{t}(m-i) & = \sum_{\ell=i+1}^{m} u_i\cdot \gamma_{i,\ell} 
 \left(\frac{1}{u_i}-\frac{1}{u_{\ell}} + \chi \sum_{j=i+1}^{\ell}  \frac{1}{u_j}\right)\\
 & = 1 + 
  u_i \sum_{\ell=i+1}^{m} \gamma_{i,\ell} 
 \left(-\frac{1}{u_{\ell}} + \chi \sum_{j=i+1}^{\ell}  \frac{1}{u_j}\right),
 \end{align*}
 where the last equality used 
 $\sum_{\ell=i+1}^m \gamma_{i,\ell}=1$. If we can prove that 
 \begin{equation}
 \label{eq:fitness-rearr}
 \sum_{\ell=i+1}^{m}  \gamma_{i,\ell} 
 \chi \sum_{j=i+1}^{\ell}  \frac{1}{u_j}
 \le \sum_{\ell=i+1}^{m}  \gamma_{i,\ell}  \cdot 
 \frac{1}{u_{\ell}}
 \end{equation}
 then $\Delta_{t}(m-i) \le 1$ follows and the proof is complete. To show this, observe 
 that 
 \[
 \sum_{\ell=i+1}^{m}  \gamma_{i,\ell} 
 \chi \sum_{j=i+1}^{\ell}  \frac{1}{u_j}
 = \sum_{j=i+1}^m \frac{1}{u_j}\cdot \chi\sum_{\ell=j}^{m}\gamma_{i,\ell} 
 \]
 since the term $\tfrac{1}{u_j}$ appears for all terms $\ell=j,\dots,m$ in the outer sum, 
 each term weighted by $\gamma_{i,\ell}\chi$. By \eqref{eq:cond-chi-fitness}, 
 we have $\chi\sum_{\ell=j}^{m}\gamma_{i,\ell} \le \gamma_{i,j}$, and 
 \eqref{eq:fitness-rearr} follows.
 \end{proof}

We remark here without going into the details that also the refined 
upper bound by fitness levels (Theorem~4 in \cite{SudholtTEC13})
can be proved using general drift.

\section{Non-monotone Variable Drift}
\label{app:variablenonmonotone}

In many applications, a monotone increasing function $h(x)$ 
bounds the drift from below. For example, the expected 
progress towards the optimum of \OneMax increases with 
the distance of the current search point from the optimum. 
However, certain ant colony optimization algorithms do not have this 
property and exhibit 
a non-monotone drift \cite{DoerrHotaKoetzingGECCO12}. To handle this case,  generalizations of the variable drift theorem 
have been developed that 
does not require $h(x)$ to be monotone. The most recent 
version of this theorem is presented in \cite{FeldmannKoetzingFOGA13}. 
Unfortunately, it turned out that the two generalizations suffer from 
a missing condition, relating positive and negative drift to each other. 
Adding the condition and 
removing an unnecessary assumption (more precisely, the continuity of $h(x)$) 
the theorem in \cite{FeldmannKoetzingFOGA13} can be  corrected 
as follows. Note that this formulation is also used in \cite{OlivetoWittTCS15} 
but proved with a specific drift theorem instead of our general approach.

\begin{theorem}[extending \cite{FeldmannKoetzingFOGA13}]
\label{theo:variablenonmonotone}
Let $(X_t)_{t\in\N_0}$, be a stochastic process over some state space $S\subseteq \{0\}\cup \R_{\ge \xmin}$, where 
  $\xmin > 0$. 
Suppose there exists two functions  $h,d\colon  \R_{\ge \xmin} \to \R^+$, where 
$1/h$ is integrable, and 
a constant $c\ge 1$  such that for all $t\ge 0$
\begin{enumerate}
\renewcommand{\labelenumi}{(\arabic{enumi})}
\item 
\label{it:i}
$E(X_t-X_{t+1} \filtcond{ X_t\ge \xmin}) \ge h(X_t)$, 
\item
\label{it:ii}
$\frac{E((X_{t+1} - X_{t}) \cdot  \indic{X_{t+1}> X_t} \filtcond{
    X_t\ge \xmin})}{E((X_t - X_{t+1}) \cdot  \indic{X_{t+1}<X_t} \filtcond{X_t\ge \xmin})} \le \frac{1}{2c^2}$,
\item
\label{it:iii}
$\lvert X_t-X_{t+1}\rvert\le d(X_t)$ if $X_t\ge \xmin$,
\item 
\label{it:iv}
 for all $x,y\ge \xmin$ with $\lvert x-y\rvert \le d(x)$, it holds $h(\min\{x,y\}) \le c h(\max\{x,y\})$.
\end{enumerate}
 Then it holds for the first hitting time 
$T:=\min\{t\mid X_t=0\}$ that 
\[
E(T\mid \filtzero) \le 
2c\left(\frac{\xmin}{h(\xmin)} + \int_{\xmin}^{X_0} \frac{1}{h(x)} \,\mathrm{d}x\right).
\] 
\end{theorem}

It is worth noting that Theorem~\ref{theo:variable-rowe-sudholt} is not necessarily 
a special case of Theorem~\ref{theo:variablenonmonotone}.

\begin{proofof}{Proof of Theorem~\ref{theo:variablenonmonotone}}
Using the definition of~$g$ according to Remark~\ref{remark:function-h} and assuming $X_t\ge \xmin$, we compute the drift
\begin{align*}
& E(g(X_t)-g(X_{t+1})  \mid \filt) 
 \;=\; 
\mathord{E}\mathord{\left(\int_{X_{t+1}}^{X_{t}} \frac{1}{h(x)} \,\mathrm{d}x\bigm| \filt\right)} \\
& \;=\; \mathord{E}\mathord{\left(\int_{X_{t+1}}^{X_{t}} \frac{1}{h(x)} \,\mathrm{d}x \cdot \indic{X_{t+1}<X_t}\bigm| \filt\right)} 
- \mathord{E}\mathord{\left(\int_{X_{t}}^{X_{t+1}} \frac{1}{h(x)} \,\mathrm{d}x\cdot\indic{X_{t+1} > X_t}\bigm| \filt\right)},
\end{align*}
where equality holds since the integral is empty if $X_{t+1}=X_t$. 
Item~\eqref{it:iv} from the prerequisites 
yields $h(x)\le c h(X_t)$ if $X_t-d(X_t)\le x<X_t$ and $h(x)\ge h(X_t)/c$ if $X_t<x\le X_t+d(X_t)$. 
Using this and $\lvert X_{t}-X_{t+1}\rvert  \le d(X_t)$, the drift can be further bounded by
\begin{align*}
& \mathord{E}\mathord{\left(\int_{X_{t+1}}^{X_{t}} \frac{1}{ch(X_t)} \,\mathrm{d}x \cdot \indic{X_{t+1}<X_t}\bigm| \filt\right)} 
- 
\mathord{E}\mathord{\left(\int_{X_t}^{X_{t+1}} \frac{c}{h(X_t)} \, \mathrm{d}x \cdot \indic{X_{t+1}>X_t}\bigm| \filt\right)} \\
& \;\ge\; 
\mathord{E}\mathord{\left(\int_{X_t}^{X_{t+1}} \frac{1}{2ch(X_t)} \, \mathrm{d}x \cdot \indic{X_{t+1}<X_t}\bigm| \filt\right)} 
\; = \; 
\frac{E((X_t-X_{t+1}\bigm| \filt)\cdot \indic{X_{t+1}<X_t}) }{2c h(X_t)}\\
& \;\ge\;  \frac{h(X_t)}{2c h(X_t)} \;=\;
\frac{1}{2c}, 
\end{align*} 
where the first inequality used the Item~\eqref{it:ii} from the prerequisites and the last one 
Item~\eqref{it:i}. 
Plugging in $\alpha_{\mathrm{u}}:=1/(2c)$ in 
Theorem~\ref{theo:main} completes the proof.
\end{proofof}

\section{Standard Results}
\begin{theorem}[Dominated Convergence Theorem,
  \cite{Williams1991ProbabilityMartingales} (page 88)]
	\label{thm:dot}
  Given a stochastic process $(S_t)_{t\in\mathbb{N}}$, two random
  variables $X$ and $V$, and a sub-$\sigma$-algebra $\mathcal{G}$ of
  $\mathcal{F}$. If
  \begin{enumerate}
  \item[1)] $|S_n(\omega)|\leq V(\omega)$ for all $\omega\in\Omega$,
  \item[2)] $\expect{V}<\infty$, and
  \item[3)] $\lim_{t\rightarrow\infty} S_t\rightarrow S $ almost surely,
  \end{enumerate}
  then $\lim_{t\rightarrow\infty}\expect{S_t\mid  \mathcal{G}}\rightarrow \expect{S\mid\mathcal{G}}$.  
\end{theorem}

\end{document}